\newenvironment{proof}{\textbf{Proof:}\ }{\hspace{\stretch{1}}$\square$\\}
\newtheorem{thm}{Theorem}
\newtheorem{lemma}{Lemma}
\def \x {\mathbf{x}}
\def \w {\mathbf{w}}
\begin{document}

\begin{frontmatter}
\title{Fast Multi-Instance Multi-Label Learning}

\author{Sheng-Jun Huang}
\author{Wei Gao}
\author{Zhi-Hua Zhou\corref{cor1}}
\address{National Key Laboratory for Novel Software Technology\\
Nanjing University, Nanjing 210093, China} \cortext[cor1]{\small Corresponding author.
Email: zhouzh@nju.edu.cn}

\begin{abstract}
In many real-world tasks, particularly those involving data objects with complicated semantics such as images and texts, one object can be represented by multiple instances and simultaneously be associated with multiple labels. Such tasks can be formulated as multi-instance multi-label learning (MIML) problems, and have been extensively studied during the past few years. Existing MIML approaches have been found useful in many applications; however, most of them can only handle moderate-sized data. To efficiently handle large data sets, in this paper we propose the MIMLfast approach, which first constructs a low-dimensional subspace shared by all labels, and then trains label specific linear models to optimize approximated ranking loss via stochastic gradient descent. Although the MIML problem is complicated, MIMLfast is able to achieve excellent performance by exploiting label relations with shared space and discovering sub-concepts for complicated labels. Experiments show that the performance of MIMLfast is highly competitive to state-of-the-art techniques, whereas its time cost is much less; particularly, on a data set with 20K bags and 180K instances, MIMLfast is more than 100 times faster than existing MIML approaches. On a larger data set where none of existing approaches can return results in 24 hours, MIMLfast takes only 12 minutes. Moreover, our approach is able to identify the most representative instance for each label, and thus providing a chance to understand the relation between input patterns and output label semantics.
\end{abstract}

\begin{keyword}
MIML, multi-instance multi-label learning, fast, key instance, sub-concepts
\end{keyword}

\end{frontmatter}

\section{Introduction}
In traditional supervised learning, one object is represented by a single instance and associated with only one label. However, in many real world applications, one object can be naturally decomposed into multiple instances, and has multiple class labels simultaneously. For example, in image classification problems, an image usually contains multiple objects, and can be divided into several segments, where each segment is represented with an instance, and corresponds to a semantic label \cite{ZZ07}; in text categorization tasks, an article may belong to multiple categories, and can be represented by a bag of instances, one for a paragraph \cite{YZH09}; in gene function prediction tasks, a gene usually has multiple labels since it is related to multiple functions, and can be represented with a set of images with different views \cite{LJKYZ}. Multi-instance multi-label learning (MIML) is a recent proposed framework for such complicated objects \cite{ZZHL12}.

During the past years, many MIML algorithms were proposed \cite{ZZ07,ZHMWQW08,JWZ09,YZH09,ZW09,LO10,N10,Z10,BFR12,ZZHL12}. They achieved decent performances and validated the superiority of MIML framework in different applications. However, along with the enhancing of expressive power, the hypothesis space of MIML expands dramatically, resulting in the high complexity and low efficiency of existing approaches. These approaches are usually time-consuming, and cannot handle large scale data, thus strongly limit the application of multi-instance multi-label learning.

In this paper, we propose a novel approach MIMLfast to learn on multi-instance multi-label data fast. Though simple linear models are employed for efficiency, MIMLfast provides an effective approximation of the original MIML problem. Specifically, to utilize the relations among multiple labels, we first learn a shared space for all the labels from the original features, and then train label specific linear models from the shared space. To identify the key instance to represent a bag for a specific label, we train the classification model on the instance level, and then select the instance with maximum prediction. To make the learning efficient, we employ stochastic gradient descent (SGD) to optimize an approximated ranking loss. At each step of SGD, MIMLfast randomly samples a triplet which consists of a bag, a relevant label of the bag and an irrelevant label, and optimizes the model to rank the relevant label before the irrelevant one if such an order is violated.

While most existing approaches focus on improving generalization, another important task of MIML learning is to understand the relation between input patterns and output label semantics \cite{LHJZ12}. Our approach can naturally identify the most representative instance for each label. In addition, we propose to discover sub-concepts for complicated labels, which frequently occur in MIML tasks.

The rest of the paper is organized as follows. We propose the MIMLfast approach in Section 2, and then present the experiments in Section 3. Section 4 reviews some related work, followed by the conclusion in Section 5.

\section{The MIMLfast Approach}
We denote by $\{(X_1,Y_1),(X_2,Y_2),\cdots,(X_n,Y_n)\}$ the training data that consists of $n$ examples, where each bag $X_i$ has $z_i$ instances $\{\x_{i,1},\x_{i,2},\cdots,\x_{i,z_i}\}$ and $Y_i$ contains the labels associated with $X_i$, which is a subset of all possible labels $\{y_1, y_2\cdots y_L\}$.

We first discuss on how to build the classification model on the instance level, and then try to get the labels of bags from instance predictions. To handle a problem with multiple labels, the simplest way is to degenerate it into a series of single label problems by training one model for each label independently. However, such a degenerating approach may lose information since it treats the labels independently and ignores the relations among them. In our approach, we formulate the model as a combination of two components. The first component learns a linear mapping from the original feature space to a low dimensional space, which is shared by all the labels. Then the second component learns label specific models based on the shared space. The two components are optimized interactively to fit training examples from all labels. In such a way, examples from each label will contribute the optimization of the shared space, and labels with strong relations are expected to help each other. Formally, given an instance $\x$, we define the classification model on label $l$ as
$$f_l(\x)=\w_l^\top W_0\x,$$
where $W_0$ is a $m\times d$ matrix which maps the original feature vectors to the shared space, and $\w_l$ is the $m$-dimensional weight vector for label $l$. $d$ and $m$ are the dimensionalities of the feature space and the shared space, respectively.

Objects in multi-instance multi-label learning tasks usually have complicated semantic; and thus examples with diverse contents may be assigned the same label. For example, the content of an image labeled \emph{apple} can be a mobile phone, a laptop or just a real apple. It is difficult to train a single model to classify images with such diverse contents into the same category. Instead, we propose to learn multiple models for a complicated label, one for a sub-concept, and automatically decide which sub-concept one example belongs to. The model of each sub-concept is much simpler and may be more easily trained to fit the data. We assume that there are $K$ sub-concepts for each label. For a given example with label $l$, the sub-concept it belongs to is automatically determined by first examining the prediction values of the $K$ models, and then selecting the sub-concept with maximum prediction value. Now we can redefine the prediction of instance $\x$ on label $l$ as:
\begin{equation}\label{eq:model}
f_l(\x)=\max_{k=1\cdots K} f_{l,k}(\x)=\max_{k=1\cdots K} \w_{l,k}^\top W_0\x,
\end{equation}
where $\w_{l,k}$ corresponds to the $k$-th sub-concept of label $l$. Note that although we assume there are $K$ sub-concepts for each label, empty sub-concepts are allowed, i.e., examples of a simple label may be distributed in only a few or even one sub-concept.

We then look at how to get the predictions of bags from the instance level models. It is usually assumed that a bag is positive if and only if it contains at least one positive instance \cite{DLL97,BFR12}. Under this assumption, the prediction of a bag $X$ on label $l$ can be defined as the maximum of predictions of all instances in this bag:
$$f_l(X)=\max_{\x\in X}f_l(\x).$$
We call the instance with maximum prediction the key instance of $X$ on label $l$.

With the above model, for an example $X$ and one of its relevant labels $l$, we define $R(X, l)$ as
\begin{equation}\label{eq:RL}
R(X, l)=\sum_{j\in \bar{Y}} I[f_j(X)>f_l(X)],
\end{equation}
where $\bar{Y}$ denotes the set of irrelevant labels of $X$, and $I[\cdot]$ is the indicator function which returns $1$ if the argument is true and $0$ otherwise. Essentially, $R(X, l)$ counts how many irrelevant labels are ranked before label $l$ on the bag $X$.

Based on $R(X, l)$, we further define the ranking error \cite{UBG09} with respect to an example $X$ on label $l$ as
\begin{equation}\label{eq:error}
\epsilon(X, l)=\sum_{i=1}^{R(X, l)}\frac{1}{i}.
\end{equation}
It is obvious that the ranking error $\epsilon$ would be larger for lower $l$ being ranked. Finally, we have the ranking error on the whole dataset as
$$
\text{Rank Error} = \sum_{i=1}^n \sum_{l\in Y_i}\epsilon(X, l).
$$

Based on Eq.~\ref{eq:RL}, the ranking error $\epsilon(X, l)$ can be spread into all irrelevant labels in $\bar{Y}$ as:
\begin{equation}\label{eq:errorspread}
\epsilon(X, l)=\sum_{j\in \bar{Y}} \epsilon(X, l)\frac{I[f_j(X)>f_l(X)]}{R(X, l)}.
\end{equation}
Due to non-convexity and discontinuousness, it is rather difficult to optimize the above equation directly because such optimization often leads to NP-hard problems. We instead explore the following hinge loss, which has been shown as an optimal choice among all convex surrogate losses \cite{BDLSS12},
\begin{equation}
\Psi(X, l)=\sum_{j\in\bar{Y}} \epsilon(X, l)\frac{|1+f_j(X)-f_l(X)|_+}{R(X, l)},
\end{equation}
where $|q|_+=q$ if $q\geq0$; otherwise, $|q|_+=0$. The surrogate loss $\Psi(X, l)$ can be viewed as an upper bound of $\epsilon(X, l)$ with the following lemma:
\begin{lemma}
$\epsilon(X, l)\leq\Psi(X, l)$.
\end{lemma}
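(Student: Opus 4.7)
The plan is to exploit the simple observation that, term by term in the sum over $j\in\bar{Y}$, the hinge quantity $|1+f_j(X)-f_l(X)|_+$ dominates the indicator $I[f_j(X)>f_l(X)]$ appearing in the spread decomposition~\eqref{eq:errorspread}. The rest is just rearranging sums.

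First I would dispose of the degenerate case $R(X,l)=0$. Here $\epsilon(X,l)$ is an empty sum and equals $0$, while $\Psi(X,l)$ is taken to be $0$ as well (or at worst a sum of nonnegative hinge terms), so the inequality is immediate. For the main case $R(X,l)\geq 1$, I would observe that whenever $f_j(X)>f_l(X)$, i.e.\ $I[f_j(X)>f_l(X)]=1$, the margin satisfies $1+f_j(X)-f_l(X)>1$, so that
\[
|1+f_j(X)-f_l(X)|_+ \;\geq\; I[f_j(X)>f_l(X)].
\]
This pointwise bound holds for every $j\in\bar{Y}$ (when the indicator is $0$, the right side is $0$ and the hinge term is nonnegative).

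Multiplying both sides by the nonnegative factor $\epsilon(X,l)/R(X,l)$ and summing over $j\in\bar{Y}$, I would then chain this with the spread identity~\eqref{eq:errorspread}:
\[
\Psi(X,l)
= \sum_{j\in\bar{Y}} \epsilon(X,l)\,\frac{|1+f_j(X)-f_l(X)|_+}{R(X,l)}
\;\geq\; \sum_{j\in\bar{Y}} \epsilon(X,l)\,\frac{I[f_j(X)>f_l(X)]}{R(X,l)}
= \epsilon(X,l),
\]
which is exactly the claim. There is no genuine obstacle here; the only subtlety worth flagging is the $R(X,l)=0$ boundary case, which must be handled separately since $\Psi$ is written as a fraction with $R(X,l)$ in the denominator, but under the natural convention that the whole expression vanishes when there are no violating labels, both sides are zero and the inequality is trivial.
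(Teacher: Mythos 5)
Your proof is correct and is essentially the paper's own argument: the paper justifies the lemma in one line via the pointwise bound $I[q]\leq|1-q|_+$ (with $q=f_l(X)-f_j(X)$), which is exactly your term-by-term domination of the indicator by the hinge, combined with the spread identity~\eqref{eq:errorspread}. The only addition is your explicit treatment of the $R(X,l)=0$ boundary case, which the paper leaves implicit but which is a reasonable point to flag.
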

\begin{proof}
This lemma holds from $I[q]\leq |1-q|_+$.
\end{proof}

We then employ stochastic gradient descent (SGD) \cite{RM51} to minimize the ranking error. At each iteration of SGD, we randomly sample a bag $X$, one of its relevant labels $y$, and one of its irrelevant labels $\bar{y}\in\bar{Y}$ to form a triplet $(X, y, \bar{y})$, which will induce a loss:
\begin{equation}
\mathcal{L}(X,y,\bar{y})=\epsilon(X, y)|1+f_{\bar{y}}(X)-f_y(X)|_+.
\end{equation}
We then have the following lemma to disclose the relation between $\Psi(X, y)$ and $\mathcal{L}(X,y,\bar{y})$.
\begin{lemma}
$\Psi(X, y)=E_{\bar{y}}[\mathcal{L}(X,y,\bar{y})]$,
where $E[\cdot]$ denotes the expectation on the uniform distribution over $\bar{Y}$.
\end{lemma}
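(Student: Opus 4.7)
The plan is a direct algebraic computation: the lemma is essentially a bookkeeping identity that reorganizes the surrogate $\Psi(X, y)$ as the expected single-triplet loss under the SGD sampling distribution. There is no real obstacle; both sides unpack to the same finite sum over $\bar{Y}$.

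Concretely, first I would unfold the right-hand side by writing the expectation explicitly,
$$E_{\bar{y}}[\mathcal{L}(X, y, \bar{y})] = \sum_{\bar{y} \in \bar{Y}} P(\bar{y})\, \mathcal{L}(X, y, \bar{y}),$$
where $P(\bar{y})$ is the mass assigned to each irrelevant label under the uniform sampling distribution. Second, I would substitute the definition $\mathcal{L}(X, y, \bar{y}) = \epsilon(X, y)\,|1 + f_{\bar{y}}(X) - f_y(X)|_+$ and pull the $\bar{y}$-independent factor $\epsilon(X, y)$ outside the sum, since it is a constant with respect to the sampling. Third, I would match the remaining sum with the definition of $\Psi$ given just above: the per-label weight $1/R(X, y)$ appearing inside $\Psi$ plays exactly the role of the per-sample probability, so the two sums agree termwise and the identity drops out.

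The only thing worth flagging is what the lemma enables downstream, rather than any difficulty in its proof. Together with the preceding lemma ($\epsilon \le \Psi$), it produces the chain $\epsilon(X, y) \le \Psi(X, y) = E_{\bar{y}}[\mathcal{L}(X, y, \bar{y})]$, which certifies that each SGD update based on a randomly sampled triplet $(X, y, \bar{y})$ is, in expectation, an update on the convex surrogate $\Psi$ that upper-bounds the non-convex, discontinuous ranking error $\epsilon$. Thus, beyond carefully tracking the sampling probability in the first step, the proof is a one- or two-line substitution.
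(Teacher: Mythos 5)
Your proof takes the same route as the paper's one-line argument: unfold the expectation as a sum over $\bar{Y}$, pull the constant $\epsilon(X,y)$ out, and identify the weight $1/R(X,y)$ inside $\Psi$ with the per-label sampling mass so the sums match termwise. One caveat you inherit from the paper itself: that identification only works if the sampling is read as uniform over the $R(X,y)$ violated labels (which is what the algorithm's rejection-style sampling effectively produces), not literally uniform over all of $\bar{Y}$ as the lemma states, since in the latter case each label has mass $1/|\bar{Y}|$ and the two sides would differ by a factor of $|\bar{Y}|/R(X,y)$.
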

\begin{proof} This lemma follows from the fact that probability of randomly choosing $\bar{y}$ in $\bar{Y}$ is $1/R(X, y)$.
\end{proof}

To minimize $\mathcal{L}(X,y,\bar{y})$, it is required to calculate $R(X, y)$ in advance, i.e., we have to compare $f_y(X)$ with $f_{\bar{y}}(X)$ for each $\bar{y}\in\bar{Y}$, whereas this could be time consuming when the number of possible labels is large. Therefore, we use an approximation to estimate $R(X, y)$ in our implementation, inspired by Weston et al. \cite{WBU11}. Specifically, at each SGD iteration, we randomly sample labels from the irrelevant label set $\bar{Y}$ one by one, until a violated label $\bar{y}$ occurs. Here we call $\bar{y}$ a violated label if it was ranked before $y$, i.e., $f_{\bar{y}}(X)>f_y(X)-1$. Without loss of generality, we assume that the first violated label is found at the $v$-th sampling step, and then, $R(X, y)$ can be approximated by $\lfloor{|\bar{Y}|}/{v}\rfloor$ with the following lemma:
\begin{lemma} We denote by $\xi$ a random event with $\xi=i$ representing the event that first violated label is at the $i$-th sampling step. We have
\[
\frac{R(X, y)}{|\bar{Y}|}\approx E_\xi\left[\frac{1}{\xi}\right].
\]
\end{lemma}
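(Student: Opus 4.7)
The plan is to recognise the sampling-until-first-violation procedure as a sequence of i.i.d.\ Bernoulli trials and then approximate $E_\xi[1/\xi]$ by $1/E_\xi[\xi]$ via a first-order moment argument. Modelling the uniform draws from $\bar Y$ as independent (effectively with-replacement, a fair approximation when the expected number of draws is small compared to $|\bar Y|$), a single draw hits a violator with probability $p := R(X,y)/|\bar Y|$. Here I am identifying the hinge-style violation condition $f_{\bar y}(X) > f_y(X) - 1$ with the strict ranking condition $f_j(X) > f_y(X)$ that defines $R(X,y)$; the two sets coincide on the ranking boundary and the approximation silently absorbs the mismatch.

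Granting this setup, $\xi$ is geometric with parameter $p$, so $E_\xi[\xi] = 1/p = |\bar Y|/R(X,y)$. This is in fact the exact identity that motivates the plug-in estimator $\lfloor |\bar Y|/v\rfloor$ for $R(X,y)$ used in the algorithm. To arrive at the stated form of the lemma, I would Taylor-expand $g(t) = 1/t$ around $t = E_\xi[\xi]$ and drop the second-order remainder: to leading order,
\[
E_\xi\!\left[\frac{1}{\xi}\right] \;\approx\; \frac{1}{E_\xi[\xi]} \;=\; p \;=\; \frac{R(X,y)}{|\bar Y|},
\]
which is exactly the claim.

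The main obstacle is being honest about the last step: it is not an equality, and cannot be upgraded to a bound in the direction suggested. Jensen's inequality in fact forces $E_\xi[1/\xi] \ge 1/E_\xi[\xi]$, and a direct computation gives $E_\xi[1/\xi] = -p\ln(p)/(1-p)$, which overshoots $p$ by a factor of order $|\ln p|$ as $p\to 0$. So the lemma must be read as a moment approximation, consistent with its use of $\approx$, rather than an identity or a rigorous inequality; the proof should state this explicitly. The clean underlying fact, if a rigorous statement were wanted, is $E_\xi[\xi] = |\bar Y|/R(X,y)$, and the lemma is simply its reciprocal reading under a first-order delta-method approximation.
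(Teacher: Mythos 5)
Your proposal is correct in substance and shares the paper's essential setup: both you and the authors model the sampling as i.i.d.\ Bernoulli trials so that $\xi$ is geometric with parameter $p = R(X,y)/|\bar{Y}|$. Where you diverge is the final approximation step. The paper evaluates $E_\xi[1/\xi]$ exactly via the power series $\sum_{i\ge 1}(1-p)^i/i = -\ln p$, obtaining $E_\xi[1/\xi] = \frac{-p}{1-p}\ln p$, and then linearizes the logarithm ($\ln(1+q)\approx q$ with $q = -(1-p)$) to conclude $\approx p$; your primary route instead invokes the first-order delta method $E_\xi[1/\xi]\approx 1/E_\xi[\xi] = p$. Both are heuristic, and since the lemma is stated with $\approx$, either suffices; but your closing remarks actually recover the paper's exact expression $-p\ln(p)/(1-p)$ and go further by quantifying when the approximation is trustworthy — it is tight only when $p$ is near $1$, and overshoots by a factor of order $|\ln p|$ as $p\to 0$, with Jensen's inequality pinning the direction of the bias. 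That error analysis is absent from the paper and is a genuine improvement in honesty, as is your flagging of the two issues the paper glosses over: the with-replacement idealization of the sampling, and the mismatch between the margin-based violation test $f_{\bar y}(X) > f_y(X)-1$ used in the algorithm and the strict inequality defining $R(X,y)$. The one thing I would adjust: lead with the exact series computation rather than the delta method, since you end up needing it anyway to assess the error, and it makes the single source of approximation (the log linearization) explicit.
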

\begin{proof}
For convenience, we set $p={R(X, y)}/{|\bar{Y}|}$ and assume $0<p<1$ without loss of generality. It is easy to derive the probability
\[
\Pr[\xi=i]=(1-p)^{i-1}p \text{ for }i\geq1,
\]
and we further have
\begin{eqnarray*}
E_\xi\left[\frac{1}{\xi}\right]&=&\sum_{i=1}^\infty\frac{1}{i} p(1-p)^{i-1} = \frac{p}{1-p}\sum_{i=1}^\infty\frac{1}{i} (1-p)^{i}\\
&=&\frac{-p}{1-p}\ln(1-(1-p))\approx p
\end{eqnarray*}
where we use $\sum_{i=1}^\infty\frac{1}{i} (1-p)^{i}=-\ln(p)$ and $\ln(1+q)\approx q$. This completes the proof.
\end{proof}

We assume that the triplet sampled at the $t$-th SGD iteration is $(X, y, \bar{y})$, on label $y$, the key instance is $\x$, and achieves the maximum prediction on the $k$-th sub-concept, while on label $\bar{y}$, the instance $\bar{\x}$ achieves the maximum prediction on the $\bar{k}$-th sub-concept. Then we have the approximated ranking loss for the triplet:
\begin{align*}
&\mathcal{L}(X,y,\bar{y})=\epsilon(X, y)|1+f_{\bar{y}}(X)-f_y(X)|_+\\
\approx &\left\{
\begin{aligned}
&0\qquad\qquad\qquad\qquad\qquad\qquad\quad\qquad\text{if $\bar{y}$ is not violated;}&\\
&S_{\bar{Y},v}(1+[\w_{\bar{y},\bar{k}}^t]^\top W_0^t\bar{\x}-[\w_{y,k}^t]^\top W_0^t\x)\quad\qquad\text{otherwise.}&
\end{aligned}
\right.
\end{align*}
Here we introduce $S_{\bar{Y},v}=\sum\nolimits_{i=1}^{\lfloor\frac{|\bar{Y}|}{v}\rfloor}\frac{1}{i}$ for the convenience of presentation. So, if a violated label $\bar{y}$ is sampled, we perform the gradient descent on the three parameters according to:
\begin{align}
W_0^{t+1}&=W_0^t-\gamma_t S_{\bar{Y},v}(\w_{\bar{y},\bar{k}}^t\bar{\x}^\top-\w_{y,k}^t\x^\top)\label{eq:sgdw0}\\
\w_{y,k}^{t+1}&=\w_{y,k}^t+\gamma_t S_{\bar{Y},v}W_0^t\x\label{eq:sgdwy}\\
\w_{\bar{y},\bar{k}}^{t+1}&=\w_{\bar{y},\bar{k}}^t-\gamma_t S_{\bar{Y},v}W_0^t\bar{\x}\label{eq:sgdwybar}
\end{align}
where $\gamma_t$ is the step size of SGD at the $t$-th iteration. After the update of the parameters, $\w_{y,k}$, $\w_{\bar y, \bar k}$ and each column of $W_0$ are normalized to have a L2 norm smaller than a constant $C$.

\begin{algorithm}[!t]
   \caption{The MIMLfast algorithm}
   \label{alg}
\begin{algorithmic}[1]
   \STATE {\emph{INPUT:}} \quad training data, parameters $m$, $C$, $K$ and $\gamma_t$
   \STATE {\emph{TRAIN:}}
   \STATE \quad initialize $W_0$ and $\w_{l,k}\ (l=1\cdots L, k=1\cdots K)$
   \STATE \quad {\bfseries repeat}:
        \STATE \qquad randomly sample a bag $X$ and one of its label $y$
        \STATE \qquad $(\x,k)=\arg\max_{\x\in X, k\in\{1\cdots K\}}f_{y,k}(\x)$
        \STATE \qquad {\bfseries for $i=1:|\bar{Y}|$}
        \STATE \qquad\quad sample an irrelevant label $\bar{y}$ from $\bar{Y}$
        \STATE \qquad\quad $(\bar{\x},\bar{k})=\arg\max_{\x\in X, \bar{k}\in\{1\cdots K\}}f_{\bar{y},\bar{k}}(\x)$
        \STATE \qquad\quad {\bfseries if $f_{\bar{y}}(X)>f_y(X)-1$}
        \STATE \qquad\qquad $v=i$
        \STATE \qquad\qquad update $W_0$, $\w_{y,k}$ and $\w_{\bar{y},\bar{k}}$ as Eqs. \ref{eq:sgdw0} to \ref{eq:sgdwybar}
        \STATE \qquad\qquad normalize $W_0$, $\w_{y,k}$ and $\w_{\bar{y},\bar{k}}$
        \STATE \qquad\qquad break
        \STATE \qquad\quad {\bfseries end if}
        \STATE \qquad {\bfseries end for}
   \STATE \quad {\bfseries until} stop criterion reached
   \STATE {\emph{TEST:}}
   \STATE \quad Relevant labels set for the test bag $X_{\text{test}}$ is: $\{l|1+f_l(X_{\text{test}})>f_{\hat{y}}(X_{\text{test}})\}$
\end{algorithmic}
\end{algorithm}

The pseudo code of MIMLfast is presented in Algorithm~\ref{alg}. First, each column of $W_0$ and $\w_l^k$ for all labels $l$ and all sub-concepts $k$ are initialized at random with mean 0 and standard deviation $1/\sqrt{d}$. Then at each iteration of SGD, a triplet $(X, y, \bar{y})$ is randomly sampled, and their corresponding key instance and sub-concepts are identified. After that, gradient descent is performed to update the three parameters: $W_0$, $\w_{y,k}$ and $\w_{\bar{y},\bar{k}}$ according to Eqs. \ref{eq:sgdw0} to \ref{eq:sgdwybar}. At last, the updated parameters are normalized such that their norms will be upper bounded by $C$. This procedure is repeated until some stop criteria reached. In our experiments, we sample a small subset from the training data to form a validation set, and stop the training if the ranking loss does not decrease anymore on the validation set.

We then present some theoretical guarantees on the convergence rate of the optimization. Denoting by
$$
\mathcal{L}_t(W_0,\w_{y,k},\w_{\bar{y},\bar{k}}) =S_{\bar{Y},v}(1+\w_{\bar{y},\bar{k}}^\top W_0\bar{\x}_t-\w_{y,k}^\top W_0\x_t)_+
$$
the loss of $t$-th SGD iteration with model parameters $W_0$, $\w_{y,k}$ $\w_{\bar{y},\bar{k}}$, and
$$
(W_0^*,\w^*_{l,k})\in\arg\min\sum_t \mathcal{L}_t(W_0,\w_{y,k},\w_{\bar{y},\bar{k}})
$$
the optimal solution, we have:
\begin{thm}
Suppose $\|\x_t\|\leq 1$, $\|W^t_0\|\leq C\sqrt{d}$, $\|\w^t_{y,k}\|\leq C$ and $\|\w^t_{\bar{y},\bar{k}}\|\leq C$. By choosing proper $W_0^0$, $\w^0_{y,k}$ and $\gamma_t$, it holds that
\[
\sum_t^T\mathcal{L}_t(W_0^t,\w^t_{y,k},\w^t_{\bar{y},\bar{k}})-\sum_t^T \mathcal{L}_t(W_0^*,\w^*_{y,k},\w^*_{\bar{y},\bar{k}})\leq B\sqrt{T}
\]
where $B=4+(d+2\sqrt{d})C^2\sum_{i=1}^L1/i$.
\end{thm}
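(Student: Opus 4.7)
The plan is to read the updates in Eqs.~\ref{eq:sgdw0}--\ref{eq:sgdwybar} as standard projected online gradient descent on the sequence of hinge losses $\mathcal{L}_t$, and obtain the regret via a Zinkevich-style argument. I would collect the per-iteration parameters into a single vector $\theta_t=(W_0^t,\w_{y,k}^t,\w_{\bar y,\bar k}^t)$, equipped with the Frobenius/Euclidean norm, and view the three updates as $\theta_{t+1}=\Pi(\theta_t-\gamma_t g_t)$, where $g_t$ is a subgradient of $\mathcal{L}_t$ at $\theta_t$ and $\Pi$ is the block-wise norm-clipping projection performed after each update.

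First I would bound the gradient. Since the hinge is either zero (no update) or affine inside the $|\cdot|_+$, it suffices to differentiate the inner expression. Using $\|\x_t\|,\|\bar\x_t\|\le 1$, the $W_0$-block has Frobenius norm $\le 2C\,S_{\bar Y,v}$ and each $\w$-block has norm $\le C\sqrt{d}\,S_{\bar Y,v}$. Because the inner for-loop breaks at the first violated label, $v\ge 1$ and $|\bar Y|\le L$ force the deterministic bound $S_{\bar Y,v}\le\sum_{i=1}^L 1/i$; this is precisely the harmonic factor that appears in $B$. Next I would bound the diameter of the feasible set $\{W_0:\|W_0\|_F\le C\sqrt{d}\}\times\{\w:\|\w\|\le C\}^2$, which is on the order of $C\sqrt{d+2}$ and contributes the $d$ and $\sqrt{d}$ terms in $B$ after multiplication against the gradient bound.

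With these ingredients in hand, I would expand $\|\theta_{t+1}-\theta^*\|^2$, use non-expansivity of the projection, invoke the subgradient inequality $\langle g_t,\theta_t-\theta^*\rangle\ge\mathcal{L}_t(\theta_t)-\mathcal{L}_t(\theta^*)$, and rearrange to obtain
\[
\mathcal{L}_t(\theta_t)-\mathcal{L}_t(\theta^*)\le\frac{\|\theta_t-\theta^*\|^2-\|\theta_{t+1}-\theta^*\|^2}{2\gamma_t}+\frac{\gamma_t}{2}\|g_t\|^2.
\]
Summing over $t=1,\dots,T$ with $\gamma_t=c/\sqrt{t}$, telescoping the first term and bounding $\sum_t 1/\sqrt{t}\le 2\sqrt{T}$, then choosing $c$ to balance the diameter against the gradient bound, yields an $O(DG\sqrt{T})$ regret whose leading constant matches the stated $B=4+(d+2\sqrt{d})C^2\sum_{i=1}^L 1/i$.

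The main obstacle is that $\mathcal{L}_t$ is bilinear in $(W_0,\w)$ and therefore \emph{not} jointly convex, so the subgradient inequality used above does not follow from convexity alone. To make the argument rigorous I would either (i) exploit separate convexity in each block and decompose the regret via a block-coordinate analysis, comparing each block's iterates against the corresponding block of $\theta^*$, or (ii) reparameterize on the fly by treating $W_0\x_t$ and $W_0\bar\x_t$ as effective linear features, under which $\mathcal{L}_t$ becomes affine in the remaining parameters at the current iterate. A secondary bookkeeping issue is the precise form of $B$: the $d$ must come from the Frobenius diameter of $W_0$, while the $2\sqrt{d}$ must arise from the cross term between the $\w$-diameter and the $W_0$-gradient (and vice versa), so I would carry these contributions separately through the regret accounting rather than collapsing them into a single Cauchy--Schwarz step.
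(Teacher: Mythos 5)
You have correctly identified the framework (Zinkevich-style regret for projected online gradient descent), the gradient bound $S_{\bar Y,v}\le\sum_{i=1}^L 1/i$, and, crucially, the real obstacle: $\mathcal{L}_t$ is bilinear in $(W_0,\w)$ and not jointly convex, so the single-vector subgradient inequality you write down for $\theta=(W_0,\w_{y,k},\w_{\bar y,\bar k})$ is not available. The problem is that you then stop at naming two possible repairs without executing either, and the unexecuted step is precisely where the proof lives. Your option (i) is essentially the paper's route: it uses convexity in the $\w$-blocks alone to bound $\sum_t\mathcal{L}_t(W_0^t,\w^t_{y,k},\w^t_{\bar y,\bar k})-\sum_t\mathcal{L}_t(W_0^t,\w^*_{y,k},\w^*_{\bar y,\bar k})$ by $(2+dC^2\sum_{i=1}^L 1/i)\sqrt T$ (note the $d$ here enters through the squared gradient of the $\w$-updates, $\|S_{\bar Y,v}W_0^t\x\|^2\le dC^2\sum_i 1/i$ using $\|W_0^t\|\le C\sqrt d$, not through the diameter of the $W_0$ feasible set as you guess), and then \emph{separately} bounds the remaining term $\sum_t\mathcal{L}_t(W_0^t,\w^*_{y,k},\w^*_{\bar y,\bar k})-\sum_t\mathcal{L}_t(W_0^*,\w^*_{y,k},\w^*_{\bar y,\bar k})$ by $2(1+\sqrt d\,C^2\sum_{i=1}^L 1/i)\sqrt T$, splitting the sum at $t=\sqrt T$ and appealing to the choice of initial values. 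That second term is where the non-convexity in $W_0$ actually bites and where the $2\sqrt d$ in $B$ comes from; your proposal says nothing about how to control it.

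Your option (ii) does not work as stated: reparameterizing with effective features $W_0^t\x_t$ makes $\mathcal{L}_t$ affine in the $\w$'s at the current iterate, but the comparator $\mathcal{L}_t(W_0^*,\w^*_{y,k},\w^*_{\bar y,\bar k})$ lives under the \emph{different} features $W_0^*\x_t$, so the subgradient inequality still does not connect your iterates to that comparator. To close the argument you must commit to the block decomposition, prove the $\w$-block regret bound (which your telescoping computation does deliver once restricted to those blocks), and then supply an actual argument for the $W_0$-discrepancy term; as it stands the proposal is a plan whose hardest step is deferred.
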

\begin{proof}
We present the main steps due to space limitation. Since the function $\mathcal{L}_t(W_0^t,\w^t_{y,k},\w^t_{\bar{y},\bar{k}})$ is convex with respect to $\w^t_{y,k}$ and $\w^t_{\bar{y},\bar{k}}$, we have
\begin{eqnarray*}
\lefteqn{\mathcal{L}_t(W_0^t,\w^t_{y,k},\w^t_{\bar{y},\bar{k}})- \mathcal{L}_t(W_0^t,\w^*_{y,k},\w^*_{\bar{y},\bar{k}})}\nonumber\\
&\leq& [\partial\mathcal{L}_t(W_0^t,\w^t_{y,k},\w^t_{\bar{y},\bar{k}})/\partial \w^t_{y,k}]^\top (\w^t_{y,k}-\w^*_{y,k})\nonumber \\
&&+[\partial\mathcal{L}_t(W_0^t,\w^t_{y,k},\w^t_{\bar{y},\bar{k}})/\partial \w^t_{\bar{y},\bar{k}}]^\top (\w^t_{\bar{y},\bar{k}}-\w^*_{\bar{y},\bar{k}}).
\end{eqnarray*}
From Eqs.~\eqref{eq:sgdwy} and \eqref{eq:sgdwybar}, we have
\begin{multline*}
\|\w^{t+1}_{y,k}-\w^*_{y,k}\|\leq \|\w^{t}_{y,k}-\w^*_{y,k}\| + \Delta_t -2\gamma_t[\partial\mathcal{L}_t(W_0^t,\w^t_{y,k},\w^t_{\bar{y},\bar{k}})/\partial \w^t_{y,k}]^\top(\w^{t}_{y,k}-\w^*_{y,k})
\end{multline*}
where
\[
\Delta_t=\gamma_t^2\Big\|S_{\bar{Y},v} W_0^t\Big\|^2 \leq dC^2\gamma_t^2\sum\nolimits_{i=1}^L1/i.
\]
This follows that
\begin{align*}
\|\w^{t+1}_{y,k}-\w^*_{y,k}\|\leq\ & \|\w^{t}_{y,k}-\w^*_{y,k}\| + dC^2\gamma_t^2\sum\nolimits_{i=1}^L1/i\\ &-2\gamma_t[\partial\mathcal{L}_t(W_0^t,\w^t_{y,k},\w^t_{\bar{y},\bar{k}})/\partial \w^t_{y,k}]^\top(\w^{t}_{y,k}-\w^*_{y,k}).
\end{align*}
In a similar manner, we have
\begin{align*}
\|\w^{t+1}_{\bar{y},\bar{k}}-\w^*_{\bar{y},\bar{k}}\|\leq\ & \|\w^{t}_{\bar{y},\bar{k}}-\w^*_{\bar{y},\bar{k}}\| + dC^2\gamma_t^2\sum\nolimits_{i=1}^L1/i\\ &-2\gamma_t[\partial\mathcal{L}_t(W_0^t,\w^t_{y,k},\w^t_{\bar{y},\bar{k}})/\partial \w^t_{\bar{y},\bar{k}}]^\top(\w^{t}_{\bar{y},\bar{k}}-\w^*_{\bar{y},\bar{k}}).
\end{align*}
Summing over $t=0,...,T-1$, and by setting $\gamma_t=1/\sqrt{t}$ and simple calculation, we have
\begin{align*}
&\sum_{t=1}^{T-1}\mathcal{L}_t(W_0^t,\w^t_{y,k},\w^t_{\bar{y},\bar{k}})- \sum_{t=1}^{T-1}\mathcal{L}_t(W_0^t,\w^*_{y,k},\w^*_{\bar{y},\bar{k}})\\
\leq& \frac{2}{\gamma_T}+B\sum_{t=1}^{T-1} \frac{\gamma_t}{2}\leq (2+dC^2\sum\nolimits_{i=1}^L1/i)\sqrt{T}.
\end{align*}
Further, we have
\begin{eqnarray*}
\lefteqn{\sum_{t=1}^{T-1}\mathcal{L}_t(W_0^t,\w^*_{y,k},\w^*_{\bar{y},\bar{k}})- \sum_{t=1}^{T-1}\mathcal{L}_t(W_0^*,\w^*_{y,k},\w^*_{\bar{y},\bar{k}})}\\
&=&\sum_{t=1}^{\sqrt{T}-1}\mathcal{L}_t(W_0^t,\w^*_{y,k}, \w^*_{\bar{y},\bar{k}})- \sum_{t=1}^{\sqrt{T}-1}\mathcal{L}_t(W_0^*,\w^*_{y,k},\w^*_{\bar{y},\bar{k}})\\
&&+\sum_{t=\sqrt{T}}^{T-1}\mathcal{L}_t(W_0^t,\w^*_{y,k},\w^*_{\bar{y},\bar{k}})- \sum_{t=\sqrt{T}}^{T-1}\mathcal{L}_t(W_0^*,\w^*_{y,k},\w^*_{\bar{y},\bar{k}})\\
&\leq&2(1+\sqrt{d}C^2\sum_{i=1}^L1/i)\sqrt{T}
\end{eqnarray*}
by selecting proper initial values and simple calculation. This theorem follows as desired.
\end{proof}

In the test phase of the algorithm, for a bag $X_{\text{test}}$, we can get the prediction value on each label, and consequently the rank of all labels. For single label classification problem, it is very easy to get the label of $X_{\text{test}}$ by selecting the one with largest prediction value. However, in multi-label learning, the bag $X_{\text{test}}$ may have more than one label; and thus one do not know how many labels should be selected as relevant ones from the ranked label list \cite{FHLB08}. To solve this problem, we assign each bag a dummy label, denoted by $\hat{y}$, and train the model to rank the dummy label before all irrelevant labels while after the relevant ones. To implement this idea, we pay a special consideration on constructing the irrelevant labels set $\bar{Y}$. Specifically, when $X$ and its label $y$ are sampled (in Line 6 of Algorithm \ref{alg}), the algorithm will first examine whether $y$ is the dummy label. If $y=\hat{y}$, then $\bar{Y}$ consists of all the irrelevant labels; otherwise, $\bar{Y}$ contains both the dummy label and all the irrelevant labels. In such a way, the model will be trained to rank the dummy label between relevant labels and irrelevant ones. For a test bag, the labels with larger prediction value than that on the dummy label are selected as relevant labels.

\begin{table}[!t]
\caption{Experimental data sets (6 moderate size and 2 large size)}
\label{table:data}
\begin{center}
\begin{tabular}{l|ccccc}
\hline
Data & \# ins. & \# bag & \# label & \# label per bag\\
\hline
\emph{Letter Frost}\raisebox{1em}{} & 565 & 144 & 26 & 3.6\\
\emph{Letter Carroll} & 717 & 166 & 26 & 3.9\\
\emph{MSRC v2} & 1758 & 591 & 23 & 2.5\\
\emph{Reuters} & 7119 & 2000 & 7 & 1.2\\
\emph{Bird Song} & 10232 & 548 & 13 & 2.1\\
\emph{Scene} & 18000 & 2000 & 5 & 1.2\\
\hline
\emph{Corel5K}\raisebox{1em}{} & 47,065 & 5,000 & 260 & 3.4\\
\emph{MSRA} & 270,000 & 30,000 & 99 & 2.7\\
\hline
\end{tabular}
\end{center}
\vspace{-0.2cm}
\end{table}

\section{Experiments}
\subsection{Settings}
We compare MIMLfast with six state-of-the-art MIML methods: DBA \cite{YZH09}, a generative model for MIML learning; KISAR \cite{LHJZ12}, a MIML algorithm tries to discover instance-label relation; MIMLBoost \cite{ZZ07}, a boosting method decomposes MIML into multi-instance single label problems; MIMLkNN \cite{Z10}, a MIML nearest neighbor algorithm; MIMLSVM \cite{ZZ07}, a SVM style algorithm which decomposes MIML into single instance multi-label problems; and RankLossSIM \cite{BFR12}, a MIML algorithm minimizes ranking loss for instance annotation.

\begin{table}[!t]
\caption{Comparison results (mean$\pm$std.) on moderate-sized data sets. $\uparrow$($\downarrow$) indicates that the larger (smaller) the value, the better the performance; $\bullet$($\circ$) indicates that MIMLfast is significantly better(worse) than the corresponding method based on paired $t$-tests at $95\%$ significance level; N/A indicates that no result was obtained in 24 hours.}
\begin{center}\label{table:result}\fontsize{8.6pt}{8.8pt}\selectfont
\begin{tabular}{l|c|ccccccc}
\hline
 \raisebox{1em}{$\ $}& MIMLfast & DBA&KISAR&MIMLBoost&MIMLkNN&MIMLSVM&RankL.SIM\\
\hline
\multicolumn{5}{l}{\emph{Letter Carroll}}\raisebox{0.5em}{$\ $}\\
\hline
\textsf{h.l.} $^{\downarrow}$ & .134$\pm$.012$\ $ & .180$\pm$.010$\bullet$ & .150$\pm$.008$\bullet$ & .153$\pm$.008$\bullet$ & .170$\pm$.017$\bullet$ & .154$\pm$.007$\bullet$ & .132$\pm$.006$\ $\\
\textsf{o.e.} $^{\downarrow}$ & .119$\pm$.050$\ $ & .248$\pm$.036$\bullet$ & .058$\pm$.096$\circ$ & .645$\pm$.062$\bullet$ & .312$\pm$.043$\bullet$ & .554$\pm$.043$\bullet$ & .167$\pm$.050$\bullet$\\
\textsf{co.} $^{\downarrow}$ & .380$\pm$.029$\ $ & .909$\pm$.023$\bullet$ & .870$\pm$.018$\bullet$ & .730$\pm$.039$\bullet$ & .460$\pm$.030$\bullet$ & .905$\pm$.020$\bullet$ & .389$\pm$.037$\ $\\
\textsf{r.l.} $^{\downarrow}$ & .130$\pm$.013$\ $ & .622$\pm$.033$\bullet$ & .873$\pm$.043$\bullet$ & .477$\pm$.035$\bullet$ & .194$\pm$.019$\bullet$ & .710$\pm$.029$\bullet$ & .134$\pm$.017$\ $\\
\textsf{a.p.} $^{\uparrow}$ & .715$\pm$.032$\ $ & .324$\pm$.029$\bullet$ & .181$\pm$.027$\bullet$ & .263$\pm$.020$\bullet$ & .611$\pm$.023$\bullet$ & .350$\pm$.022$\bullet$ & .708$\pm$.026$\ $\\

\hline
\multicolumn{5}{l}{\emph{Letter Frost}}\raisebox{0.5em}{$\ $}\\
\hline
\textsf{h.l.} $^{\downarrow}$ & .136$\pm$.014$\ $ & .166$\pm$.010$\bullet$ & .200$\pm$.013$\bullet$ & .139$\pm$.007$\ $ & .139$\pm$.010$\ $ & .154$\pm$.013$\bullet$ & .136$\pm$.010$\ $\\
\textsf{o.e.} $^{\downarrow}$ & .151$\pm$.041$\ $ & .228$\pm$.056$\bullet$ & .380$\pm$.064$\bullet$ & .257$\pm$.101$\bullet$ & .288$\pm$.077$\bullet$ & .581$\pm$.045$\bullet$ & .203$\pm$.055$\bullet$\\
\textsf{co.} $^{\downarrow}$ & .375$\pm$.042$\ $ & .857$\pm$.032$\bullet$ & .906$\pm$.019$\bullet$ & .728$\pm$.038$\bullet$ & .463$\pm$.035$\bullet$ & .884$\pm$.028$\bullet$ & .372$\pm$.038$\ $\\
\textsf{r.l.} $^{\downarrow}$ & .134$\pm$.019$\ $ & .580$\pm$.033$\bullet$ & .705$\pm$.036$\bullet$ & .478$\pm$.030$\bullet$ & .199$\pm$.018$\bullet$ & .810$\pm$.101$\bullet$ & .138$\pm$.019$\ $\\
\textsf{a.p.} $^{\uparrow}$ & .704$\pm$.034$\ $ & .358$\pm$.030$\bullet$ & .264$\pm$.028$\bullet$ & .235$\pm$.014$\bullet$ & .612$\pm$.027$\bullet$ & .226$\pm$.060$\bullet$ & .686$\pm$.035$\bullet$\\

\hline
\multicolumn{5}{l}{\emph{MSRC v2}}\raisebox{0.5em}{$\ $}\\
\hline
\textsf{h.l.} $^{\downarrow}$ & .100$\pm$.007$\ $ & .140$\pm$.006$\bullet$ & .086$\pm$.004$\circ$ & N/A$\ $ & .131$\pm$.007$\bullet$ & .084$\pm$.003$\circ$ & .110$\pm$.004$\bullet$\\
\textsf{o.e.} $^{\downarrow}$ & .295$\pm$.025$\ $ & .415$\pm$.026$\bullet$ & .341$\pm$.031$\bullet$ & N/A$\ $ & .440$\pm$.031$\bullet$ & .320$\pm$.029$\bullet$ & .302$\pm$.028$\ $\\
\textsf{co.} $^{\downarrow}$ & .238$\pm$.014$\ $ & .837$\pm$.018$\bullet$ & .254$\pm$.015$\bullet$ & N/A$\ $ & .312$\pm$.020$\bullet$ & .256$\pm$.018$\bullet$ & .239$\pm$.013$\ $\\
\textsf{r.l.} $^{\downarrow}$ & .108$\pm$.009$\ $ & .675$\pm$.017$\bullet$ & .131$\pm$.010$\bullet$ & N/A$\ $ & .165$\pm$.013$\bullet$ & .125$\pm$.011$\bullet$ & .107$\pm$.007$\ $\\
\textsf{a.p.} $^{\uparrow}$ & .688$\pm$.017$\ $ & .326$\pm$.016$\bullet$ & .666$\pm$.018$\bullet$ & N/A$\ $ & .591$\pm$.018$\bullet$ & .685$\pm$.018$\ $ & .687$\pm$.013$\ $\\

\hline
\multicolumn{5}{l}{\emph{Reuters}}\raisebox{0.5em}{$\ $}\\
\hline
\textsf{h.l.} $^{\downarrow}$ & .028$\pm$.004$\ $ & .043$\pm$.004$\bullet$ & .032$\pm$.003$\bullet$ & N/A$\ $ & .034$\pm$.004$\bullet$ & .042$\pm$.004$\bullet$ & .037$\pm$.003$\bullet$\\
\textsf{o.e.} $^{\downarrow}$ & .044$\pm$.008$\ $ & .077$\pm$.011$\bullet$ & .057$\pm$.010$\bullet$ & N/A$\ $ & .065$\pm$.011$\bullet$ & .100$\pm$.015$\bullet$ & .055$\pm$.007$\bullet$\\
\textsf{co.} $^{\downarrow}$ & .035$\pm$.004$\ $ & .089$\pm$.010$\bullet$ & .036$\pm$.004$\bullet$ & N/A$\ $ & .043$\pm$.004$\bullet$ & .050$\pm$.006$\bullet$ & .036$\pm$.004$\bullet$\\
\textsf{r.l.} $^{\downarrow}$ & .014$\pm$.004$\ $ & .062$\pm$.008$\bullet$ & .016$\pm$.003$\bullet$ & N/A$\ $ & .023$\pm$.004$\bullet$ & .031$\pm$.005$\bullet$ & .016$\pm$.003$\bullet$\\
\textsf{a.p.} $^{\uparrow}$ & .972$\pm$.005$\ $ & .922$\pm$.008$\bullet$ & .966$\pm$.006$\bullet$ & N/A$\ $ & .958$\pm$.006$\bullet$ & .939$\pm$.009$\bullet$ & .967$\pm$.005$\bullet$\\

\hline
\multicolumn{5}{l}{\emph{Bird Song}}\raisebox{0.5em}{$\ $}\\
\hline
\textsf{h.l.} $^{\downarrow}$ & .073$\pm$.009$\ $ & .116$\pm$.005$\bullet$ & .098$\pm$.011$\bullet$ & N/A$\ $ & .081$\pm$.007$\bullet$ & .073$\pm$.005$\ $ & .087$\pm$.008$\bullet$\\
\textsf{o.e.} $^{\downarrow}$ & .055$\pm$.017$\ $ & .101$\pm$.020$\bullet$ & .159$\pm$.039$\bullet$ & N/A$\ $ & .122$\pm$.029$\bullet$ & .111$\pm$.025$\bullet$ & .064$\pm$.046$\ $\\
\textsf{co.} $^{\downarrow}$ & .150$\pm$.013$\ $ & .292$\pm$.015$\bullet$ & .186$\pm$.018$\bullet$ & N/A$\ $ & .175$\pm$.015$\bullet$ & .173$\pm$.013$\bullet$ & .133$\pm$.011$\circ$\\
\textsf{r.l.} $^{\downarrow}$ & .036$\pm$.007$\ $ & .132$\pm$.010$\bullet$ & .067$\pm$.012$\bullet$ & N/A$\ $ & .059$\pm$.010$\bullet$ & .054$\pm$.006$\bullet$ & .027$\pm$.008$\circ$\\
\textsf{a.p.} $^{\uparrow}$ & .921$\pm$.014$\ $ & .786$\pm$.013$\bullet$ & .847$\pm$.026$\bullet$ & N/A$\ $ & .878$\pm$.017$\bullet$ & .888$\pm$.011$\bullet$ & .930$\pm$.025$\ $\\

\hline
\multicolumn{5}{l}{\emph{Scene}}\raisebox{0.5em}{$\ $}\\
\hline
\textsf{h.l.} $^{\downarrow}$ & .188$\pm$.009$\ $ & .269$\pm$.009$\bullet$ & .194$\pm$.005$\bullet$ & N/A$\ $ & .196$\pm$.007$\bullet$ & .200$\pm$.008$\bullet$ & .204$\pm$.007$\bullet$\\
\textsf{o.e.} $^{\downarrow}$ & .351$\pm$.023$\ $ & .386$\pm$.025$\bullet$ & .351$\pm$.020$\ $ & N/A$\ $ & .370$\pm$.018$\bullet$ & .380$\pm$.021$\bullet$ & .392$\pm$.019$\bullet$\\
\textsf{co.} $^{\downarrow}$ & .207$\pm$.012$\ $ & .334$\pm$.011$\bullet$ & .204$\pm$.008$\circ$ & N/A$\ $ & .222$\pm$.009$\bullet$ & .225$\pm$.010$\bullet$ & .237$\pm$.010$\bullet$\\
\textsf{r.l.} $^{\downarrow}$ & .189$\pm$.014$\ $ & .348$\pm$.012$\bullet$ & .185$\pm$.010$\ $ & N/A$\ $ & .207$\pm$.011$\bullet$ & .212$\pm$.011$\bullet$ & .222$\pm$.010$\bullet$\\
\textsf{a.p.} $^{\uparrow}$ & .770$\pm$.015$\ $ & .600$\pm$.013$\bullet$ & .772$\pm$.012$\ $ & N/A$\ $ & .757$\pm$.011$\bullet$ & .750$\pm$.012$\bullet$ & .738$\pm$.011$\bullet$\\
\hline
\end{tabular}
\end{center}
\end{table}

We perform the experiments on 6 moderate-sized data sets and 2 large data sets. Among the moderate-sized data sets, \emph{Scene} and \emph{Reuters} are two benchmark data sets commonly used in existing MIML works. \emph{Scene} \cite{ZZ07} consists of 2000 images for scene classification, and is associated with 5 possible labels: desert, mountains, sea, sunset and trees. For each image, a bag of 9 instances is extracted via SBN \cite{MR98}. \emph{Reuters} is constructed based on the Reuters-21578 data set \cite{S02} with the sliding window technique in \cite{ATH02}. The other four moderate-sized data sets are collected by Fern et al. in their recent work \cite{BFR12}: \emph{Letter Carroll} and \emph{Letter Frost} are constructed using the UCI Letter Recognition dataset \cite{FS91}, where a bag is created for each word, and labels correspond to the letters. \emph{Bird Song} consists of bird song recordings at the H. J. Andrews (HJA) Experimental Forest. Each bag is extracted from a 10-seconds audio recording while labels correspond to species of birds. \emph{MSRC v2} is a subset of the Microsoft Research Cambridge (MSRC) image dataset \cite{WCM05}. Based on the ground-truth segmentation, histograms of gradients and colors are extracted to form an instance for each segment. The two large data sets are \emph{Corel5K} and \emph{MSRA}. \emph{Corel5K} \cite{DBDF02} contains 5000 segmented images and 260 class labels, and each image is represented by 9 instances on average. \emph{MSRA} \cite{LWH09} is a multimedia database collected by Microsoft Research Asia, the subset used in this work contains 30000 images with 99 possible labels, and each image is represented with a bag of 9 instances.  The detailed characteristics of these data sets are summarized in Table \ref{table:data}.

For \emph{MSRA} and \emph{Corel5K}, since existing MIML approaches cannot handle large scale data, we examine the performances of compared approaches on a series of subsets with different number of training bags (which will be specified later). For each data set, $2/3$ of the data are randomly sampled for training, and the remaining examples are taken as test set. We repeat the random data partition for thirty times, and report the average results over the thirty repetitions.

For MIMLfast, the step size is in the form $\gamma_t=\gamma_0/(1+\eta \gamma_0 t)$ according to \cite{B10}. The parameters are selected by 3-fold cross validation on the training data with regard to ranking loss. The candidate values for the parameters are as below: $m\in\{50,100,200\}$, $C\in\{1, 5, 10\}$, $K\in\{1,5,10,15\}$, $\gamma_0\in\{0.0001,0.0005,0.001,0.005\}$ and $\eta\in\{10^{-5},10^{-6}\}$. In our experience, the algorithm is not very sensitive to $m$ and $C$; and the influence of $K$ will be studied in Section 3.5. For the compared approaches, parameters are determined in the same way if no value suggested in their literatures.

\begin{figure*}[!t]
\begin{center}
$\qquad$\begin{minipage}{0.22\linewidth}
\includegraphics[width=\textwidth]{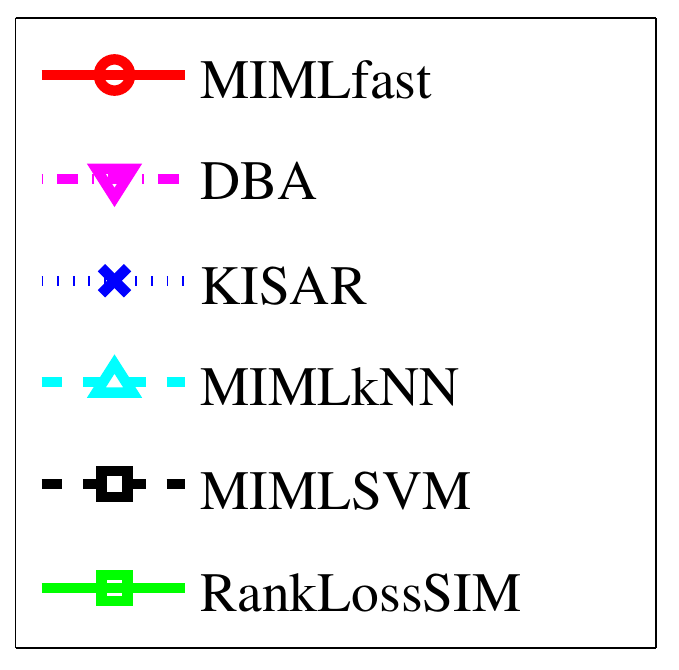}\\
\centering{(a) \textsf{Legend}}
\end{minipage}$\ \ \quad$
\begin{minipage}{0.32\linewidth}
\includegraphics[width=\textwidth]{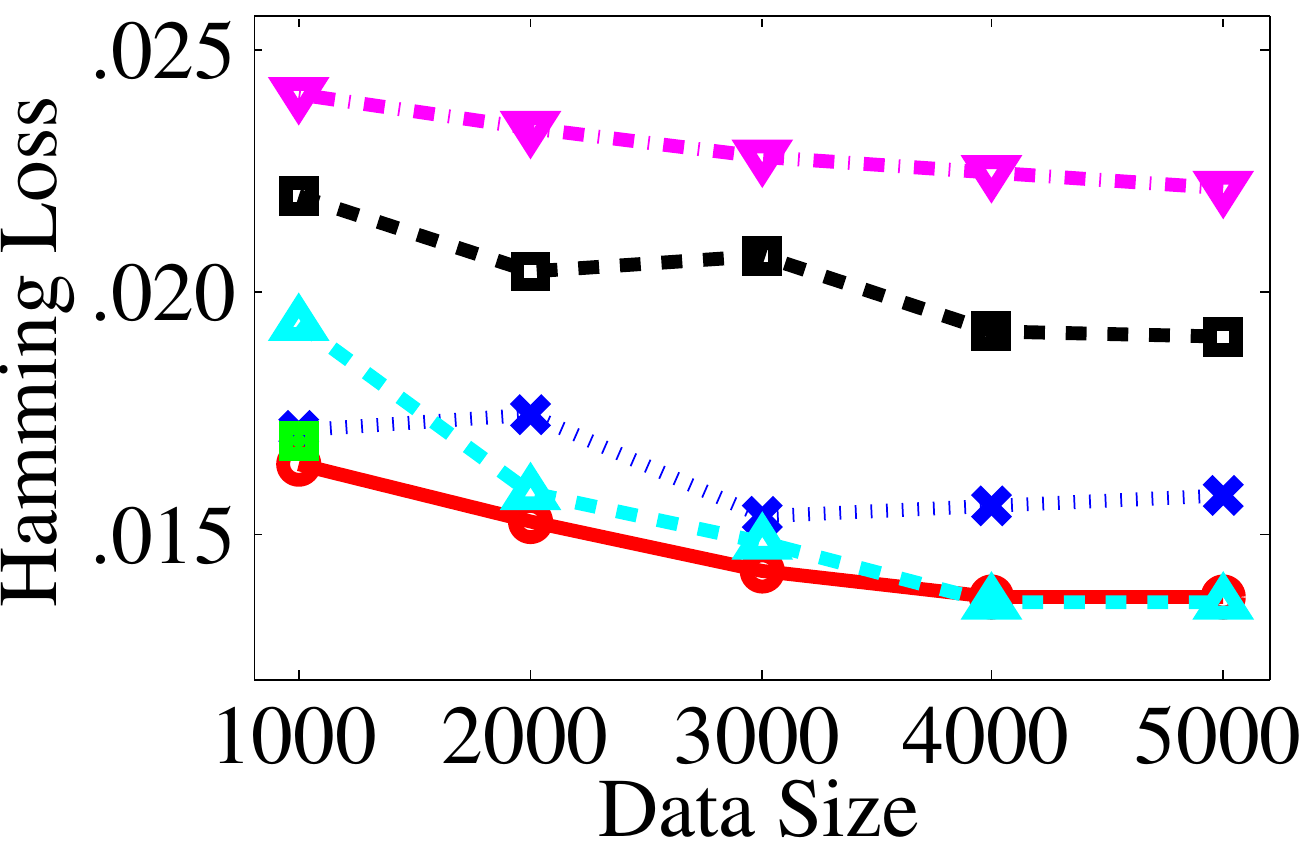}\\
\centering{(b) \textsf{Hamming Loss} $\downarrow$}
\end{minipage}
\begin{minipage}{0.32\linewidth}
\includegraphics[width=\textwidth]{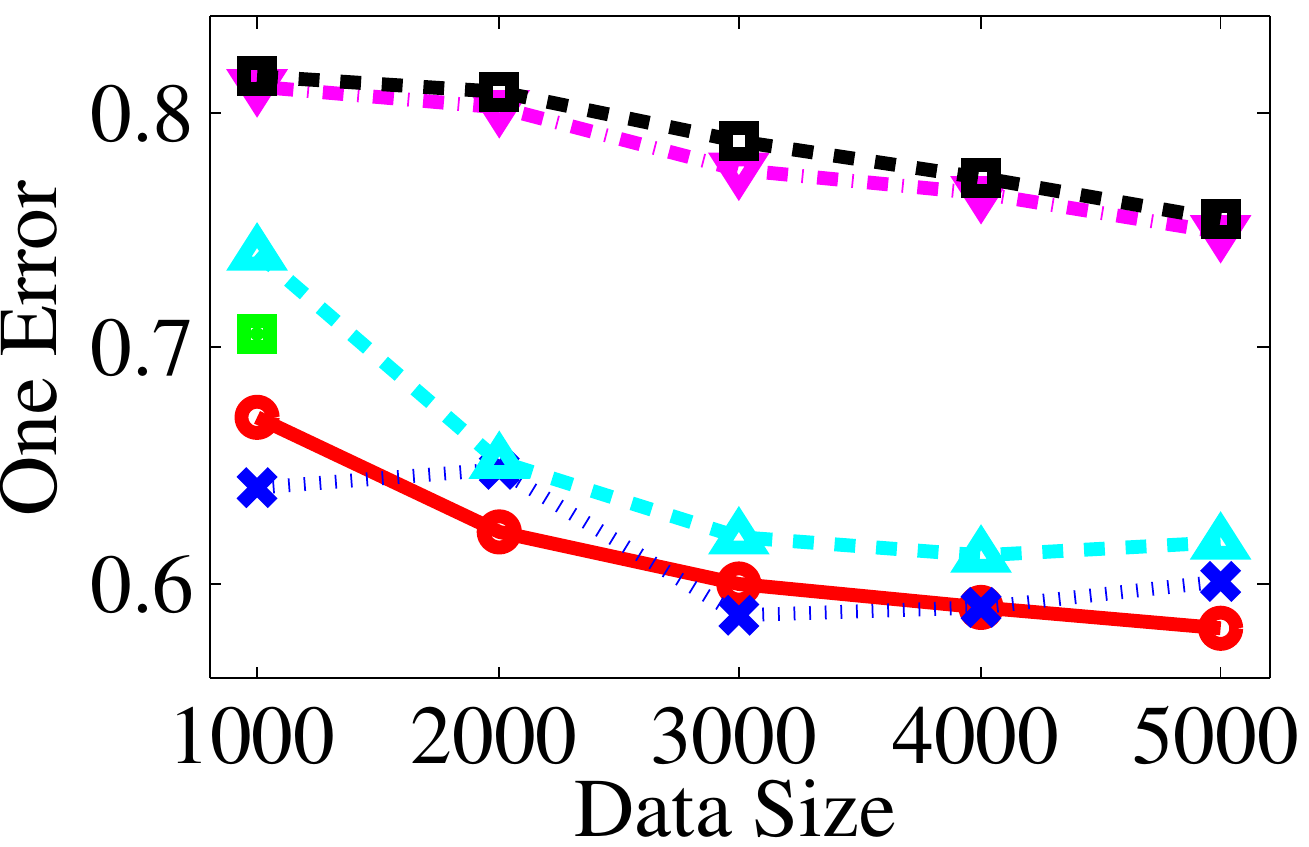}\\
\centering{(c) \textsf{One Error} $\downarrow$}
\end{minipage}\\
\begin{minipage}{0.32\linewidth}
\includegraphics[width=\textwidth]{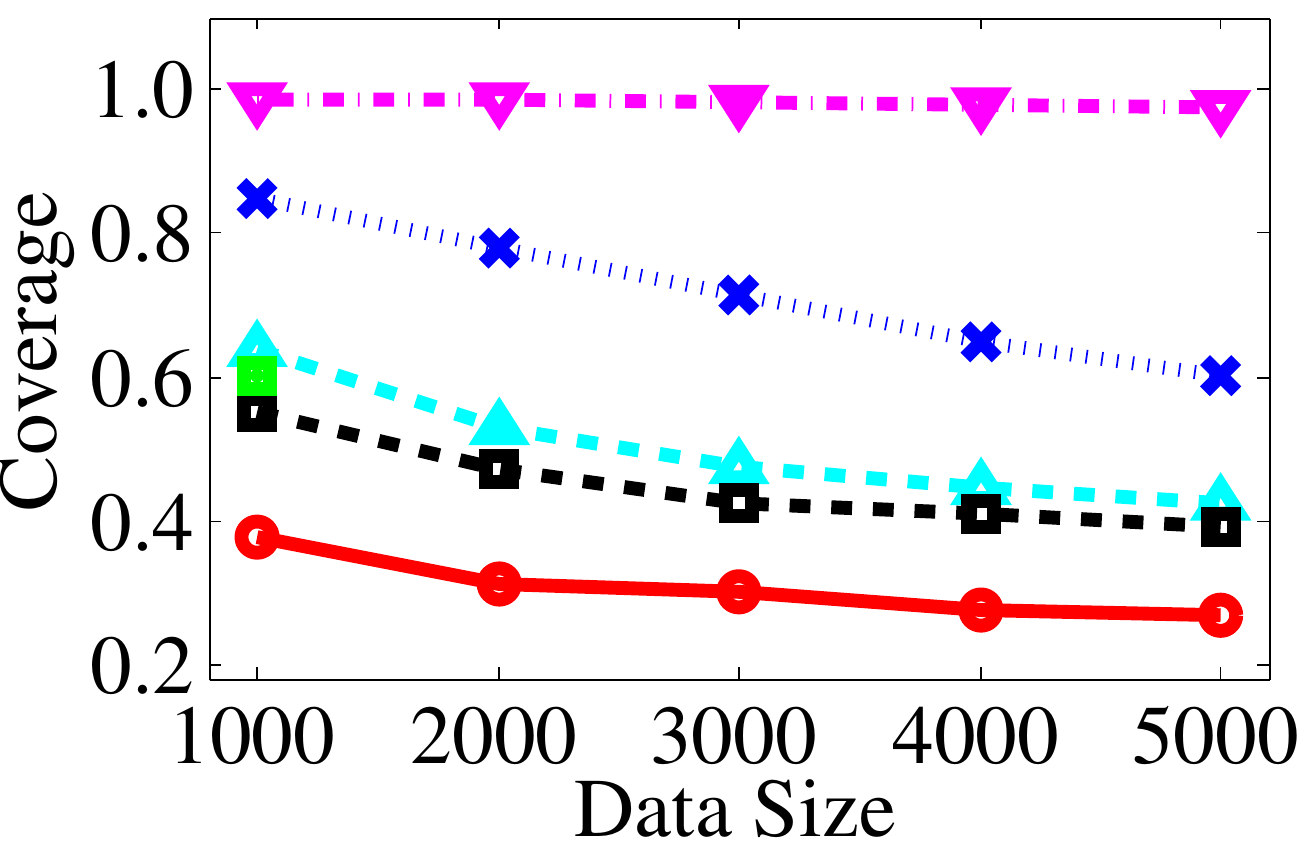}\\
\centering{(d) \textsf{Coverage} $\downarrow$}
\end{minipage}
\begin{minipage}{0.32\linewidth}
\includegraphics[width=\textwidth]{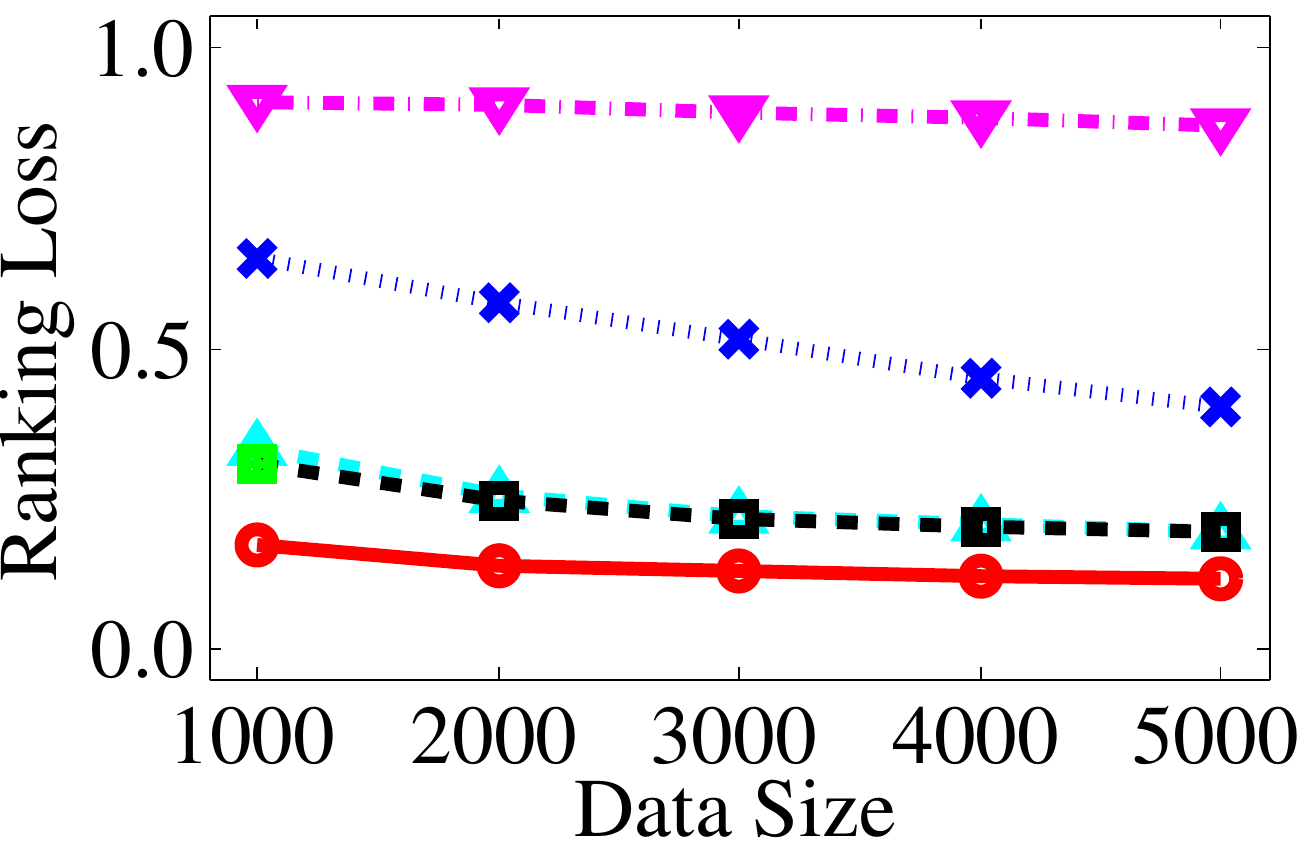}\\
\centering{(e) \textsf{Ranking Loss} $\downarrow$}
\end{minipage}
\begin{minipage}{0.32\linewidth}
\includegraphics[width=\textwidth]{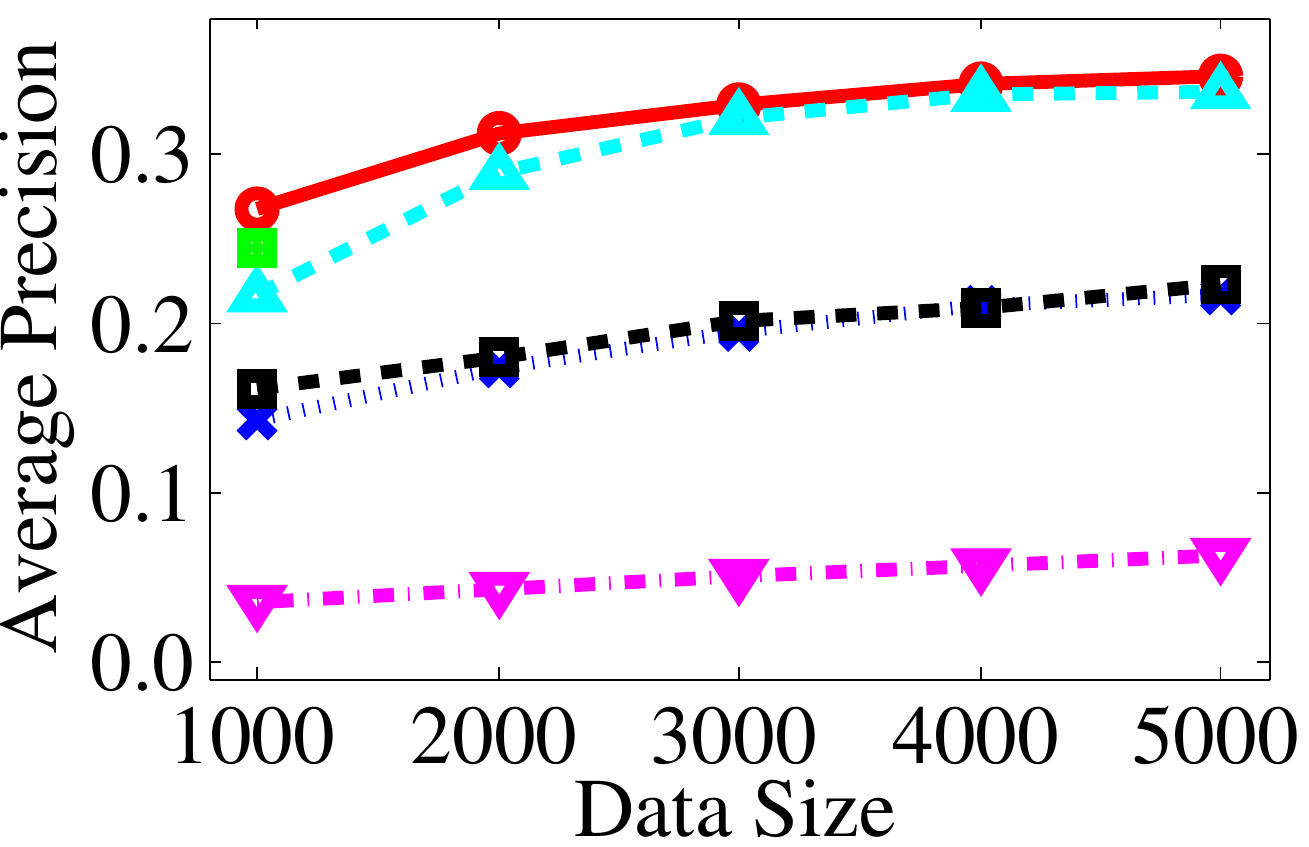}\\
\centering{(f) \textsf{Average Precision} $\uparrow$}
\end{minipage}
\caption{Comparison results on \emph{Corel5K} with varying data size; $\uparrow$($\downarrow$) indicates that the larger (smaller) the value, the better the performance.}\label{fig:corel}
\end{center}
\vspace{-0.2cm}
\end{figure*}

\begin{figure*}[!t]
\begin{center}
$\quad$\begin{minipage}{0.28\linewidth}
\includegraphics[width=\textwidth]{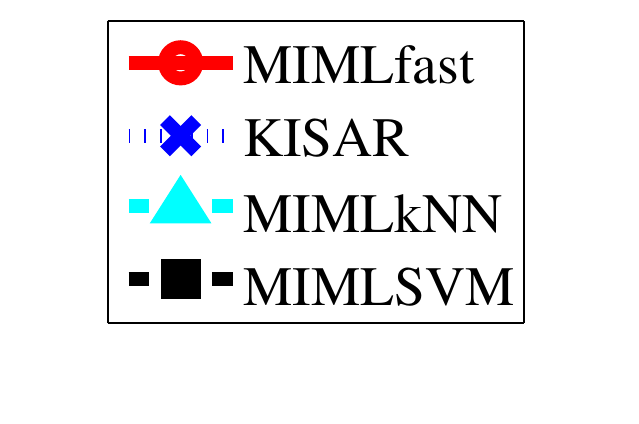}\\
\centering{(a) \textsf{Legend}}
\end{minipage}$\quad$
\begin{minipage}{0.32\linewidth}
\includegraphics[width=\textwidth]{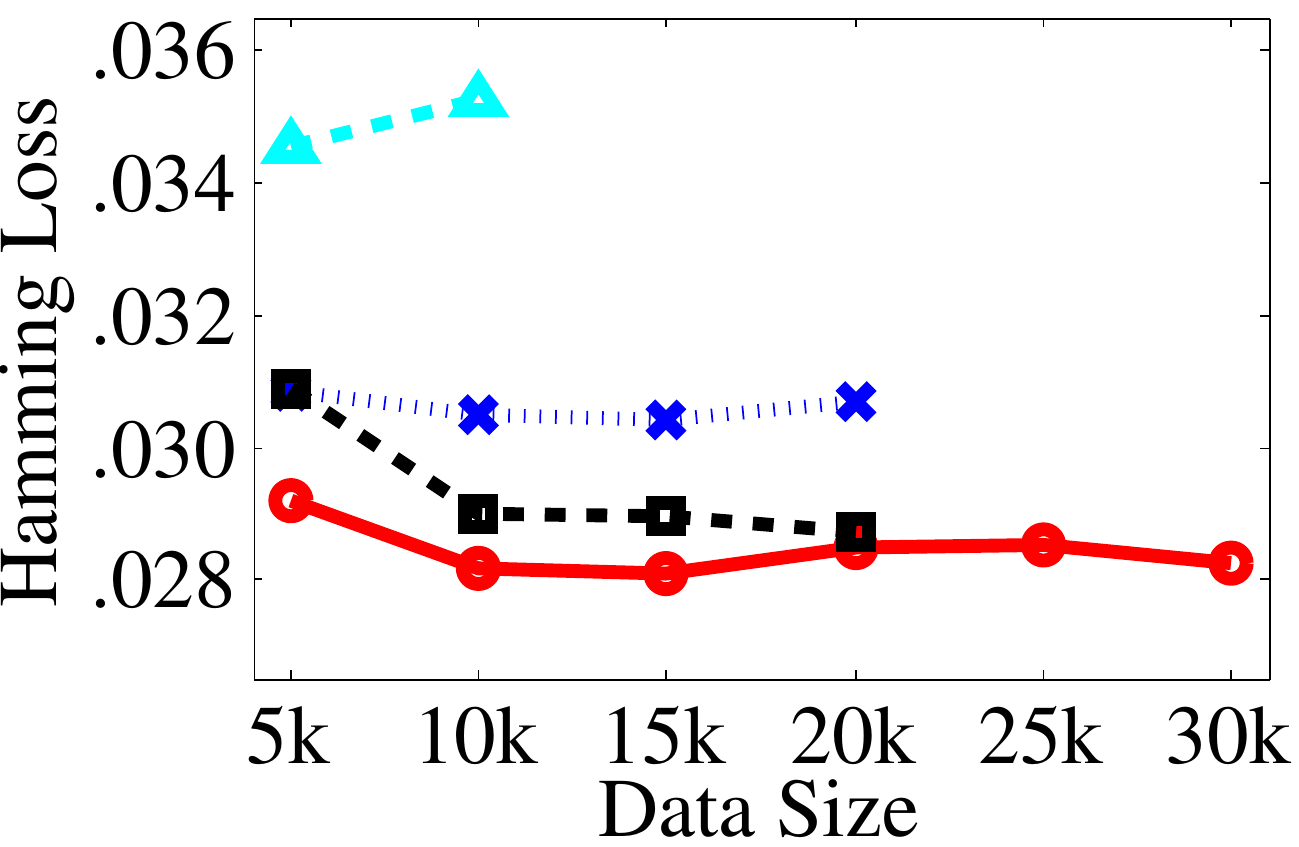}\\
\centering{(b) \textsf{Hamming Loss} $\downarrow$}
\end{minipage}
\begin{minipage}{0.32\linewidth}
\includegraphics[width=\textwidth]{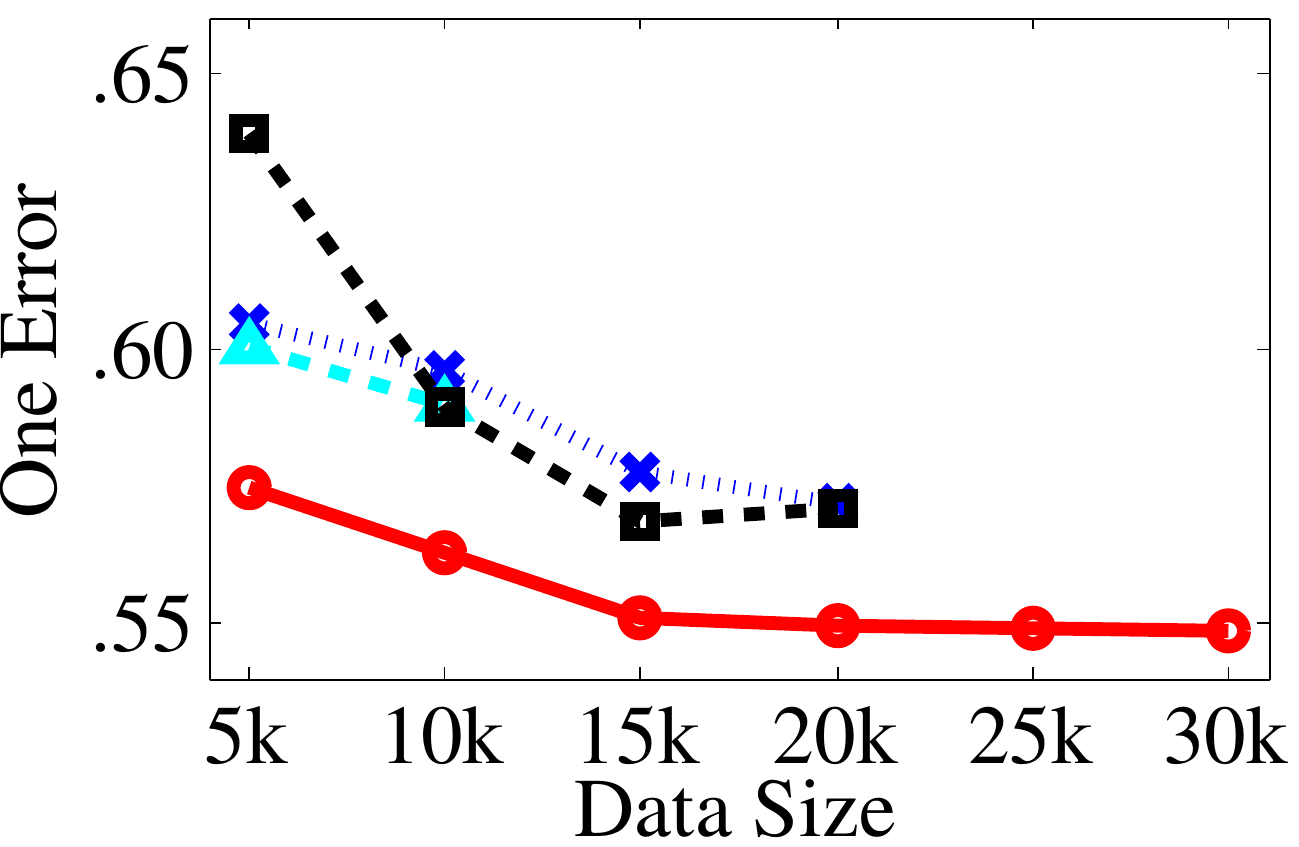}\\
\centering{(c) \textsf{One Error} $\downarrow$}
\end{minipage}\\
\begin{minipage}{0.32\linewidth}
\includegraphics[width=\textwidth]{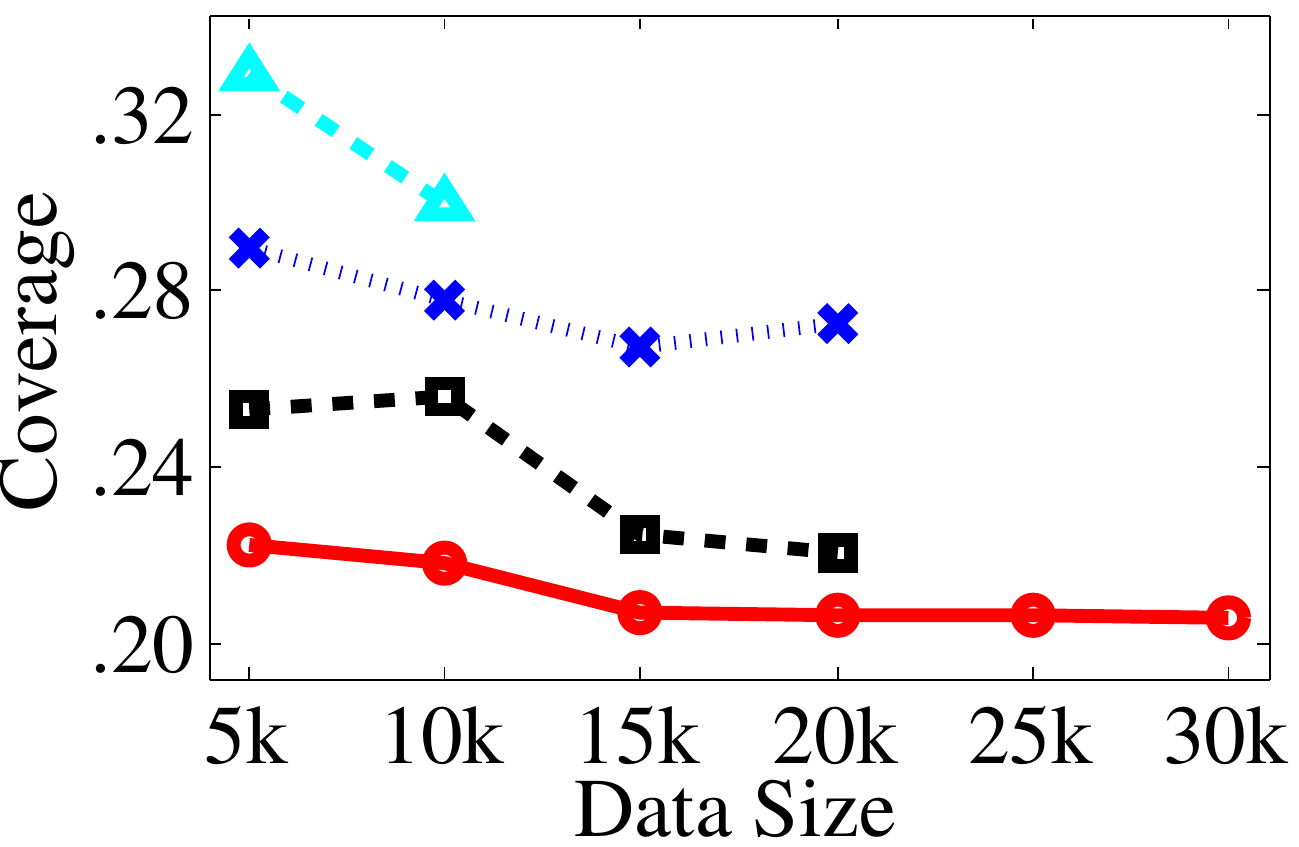}\\
\centering{(d) \textsf{Coverage} $\downarrow$}
\end{minipage}
\begin{minipage}{0.32\linewidth}
\includegraphics[width=\textwidth]{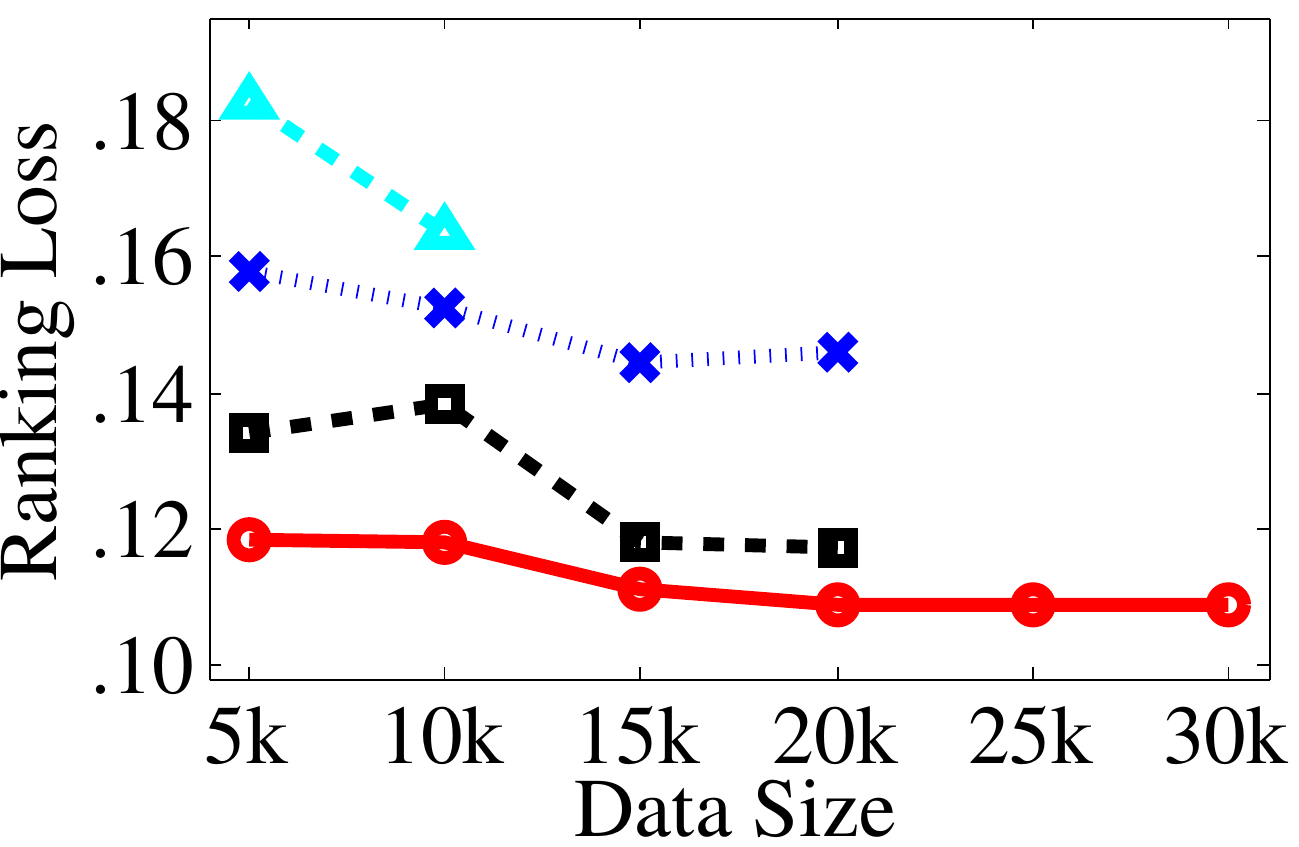}\\
\centering{(e) \textsf{Ranking Loss} $\downarrow$}
\end{minipage}
\begin{minipage}{0.32\linewidth}
\includegraphics[width=\textwidth]{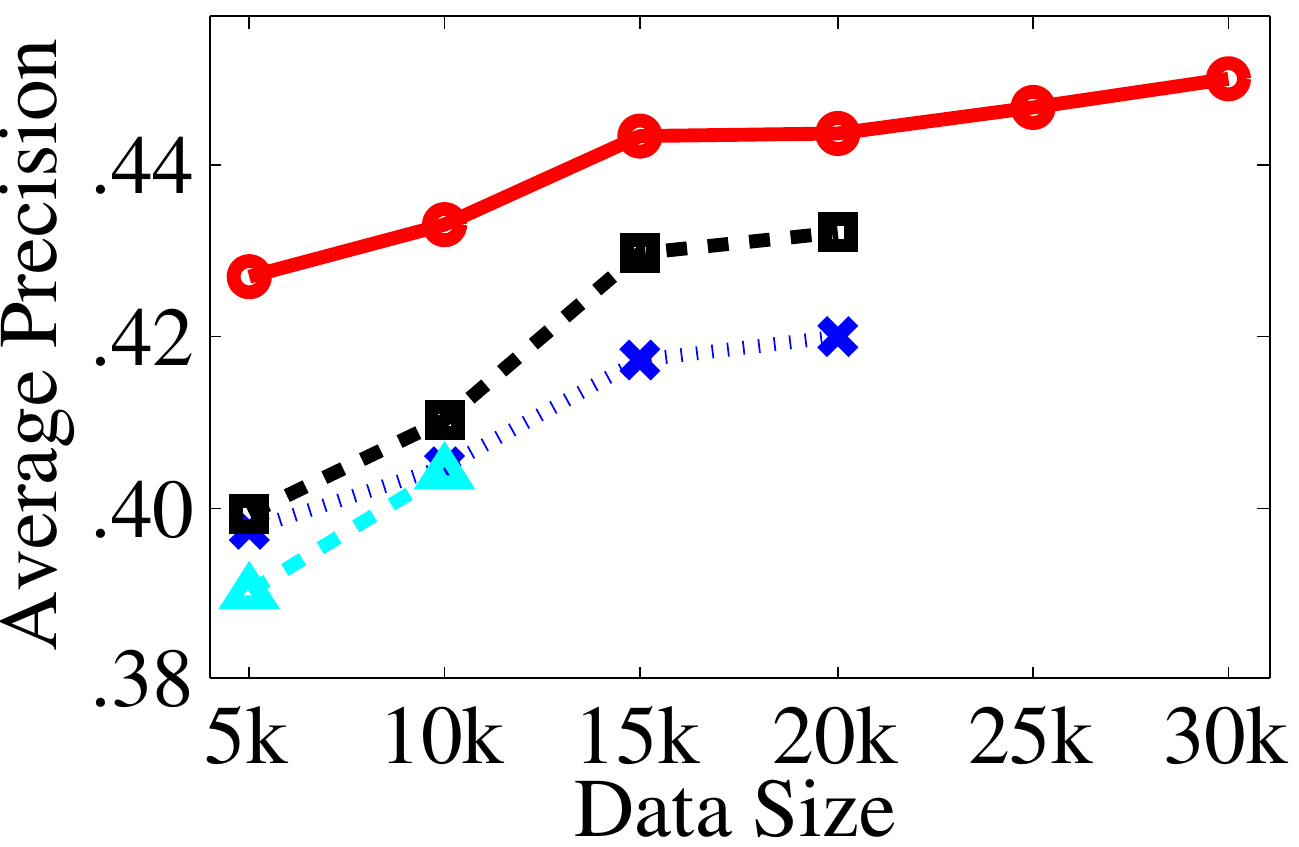}\\
\centering{(f) \textsf{Average Precision} $\uparrow$}
\end{minipage}
\caption{Comparison results on \emph{MSRA} with varying data size; $\uparrow$($\downarrow$) indicates that the larger (smaller) the value, the better the performance; only MIMLfast can work when data size reaches 25,000.}\label{fig:msra}
\end{center}
\vspace{-0.2cm}
\end{figure*}

The performances of the compared approaches are evaluated with five commonly used MIML criteria: \textsf{hamming loss}, \textsf{one error}, \textsf{coverage}, \textsf{ranking loss} and \textsf{average precision}. For \textsf{average precision}, a larger value implies a better performance, while for the other four criteria, the smaller, the better. Note that \textsf{coverage} is normalized by the number of labels such that all criteria are in the interval $[0,1]$. The definition of these criteria can be found in \cite{SS00,ZZHL12}.

\subsection{Performance Comparison}
We first report the comparison results on the six moderate-sized data sets in Table \ref{table:result}. As shown in the table, our approach MIMLfast achieves the best performance in most cases. DBA tends to favor text data, and is outperformed by MIMLfast on all the data sets. KISAR achieves comparable results with MIMLfast on \emph{Scene} while is less effective on the other data sets. MIMLBoost can handle only the two smallest data sets, and does not yield good performance. MIMLkNN and MIMLSVM work steady on all the data sets, but are not competitive when compared with MIMLfast. At last, RankLossSIM is comparable to MIMLfast on 4 of 6 data sets, and even achieves better \textsf{coverage} and \textsf{ranking loss} on the \emph{Bird Song} data set. However, on the other two data sets with relative more bags, i.e., \emph{Reuters} and \emph{Scene}, it is significantly worse than our approach on all the five criteria.

\emph{MSRA} and \emph{Corel5K} contain 30000 and 5000 bags respectively, which are too large for most existing MIML approaches. We thus perform the comparison on subsets of them with different data sizes. We vary the number of bags from 1000 to 5000 for \emph{Corel5K}, and 5000 to 30000 for \emph{MSRA}, and plot the performance curves in Figures \ref{fig:corel} and \ref{fig:msra}, respectively. MIMLBoost did not return results in 24 hours even for the smallest data size, and thus it is not included in the comparison. RankLossSIM is not presented on \emph{MSRA} for the same reason. We also exclude DBA on \emph{MSRA} because its performance is too bad. As observable in Figures \ref{fig:corel} and \ref{fig:msra}, MIMLfast is apparently better than the others on these two large data sets. Particularly, when data size reaches 25K, other methods cannot work, but MIMLfast still works well.

\begin{figure*}[!t]
\begin{center}
\begin{minipage}{0.95\linewidth}
\includegraphics[width=\textwidth]{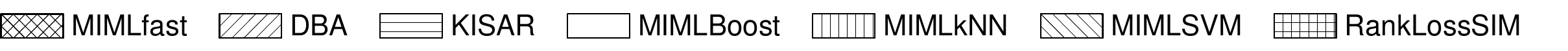}
\end{minipage}\\
\begin{minipage}{0.195\linewidth}
\includegraphics[width=\textwidth]{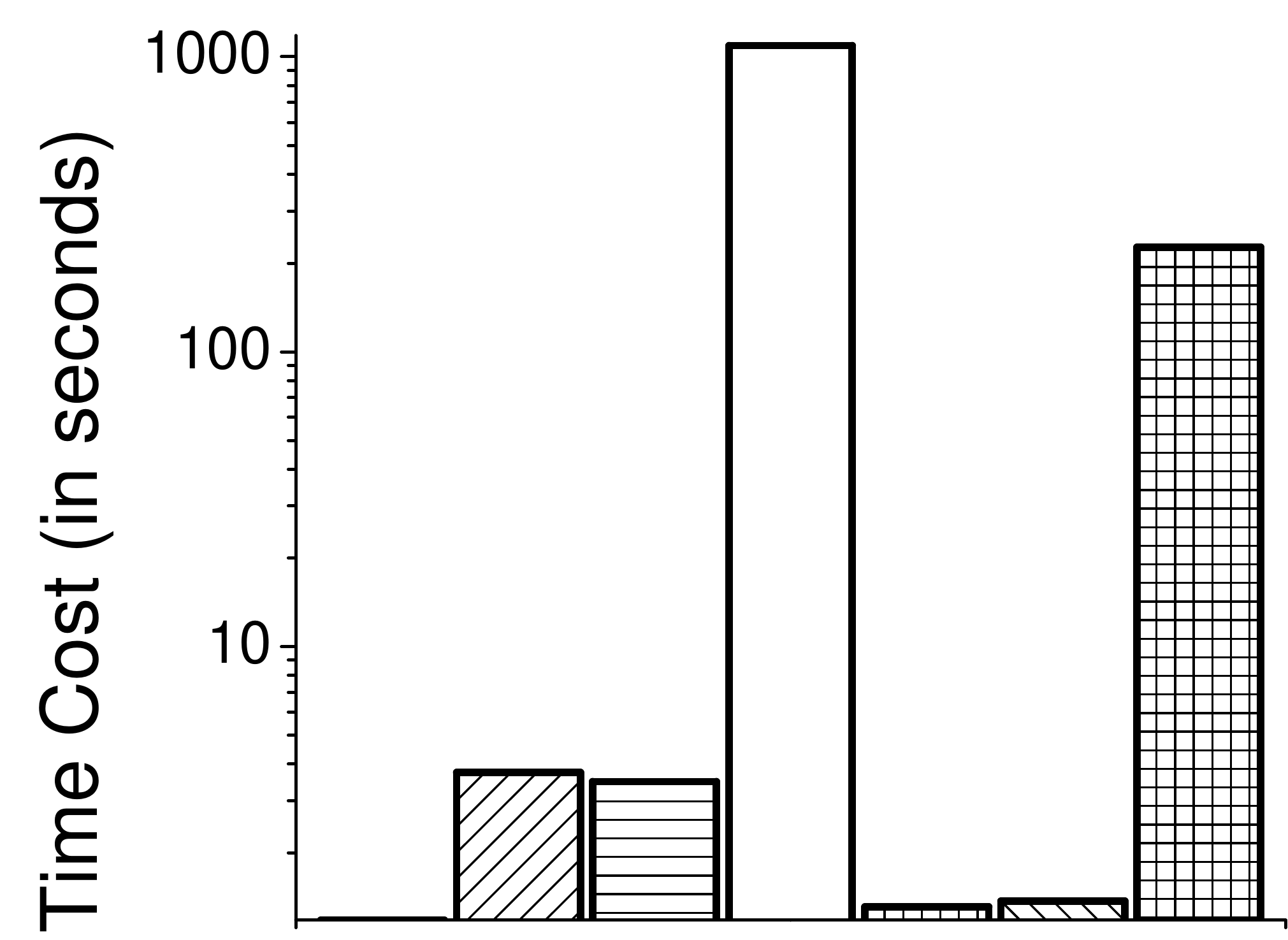}\\
\centering{(a) \emph{Letter Frost}}
\end{minipage}
\begin{minipage}{0.195\linewidth}
\includegraphics[width=\textwidth]{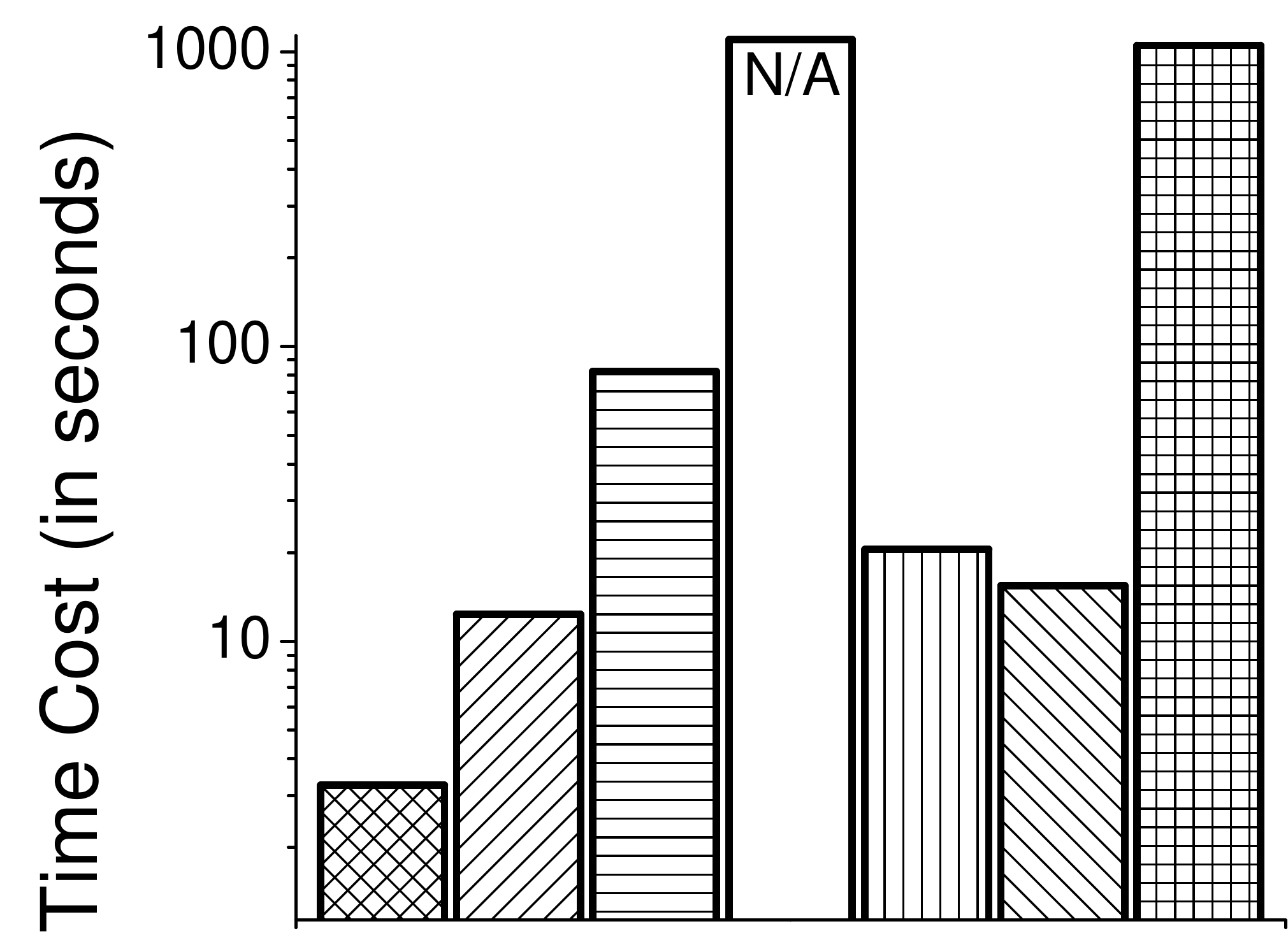}\\
\centering{(b) \emph{MSRC v2}}
\end{minipage}
\begin{minipage}{0.19\linewidth}
\includegraphics[width=\textwidth]{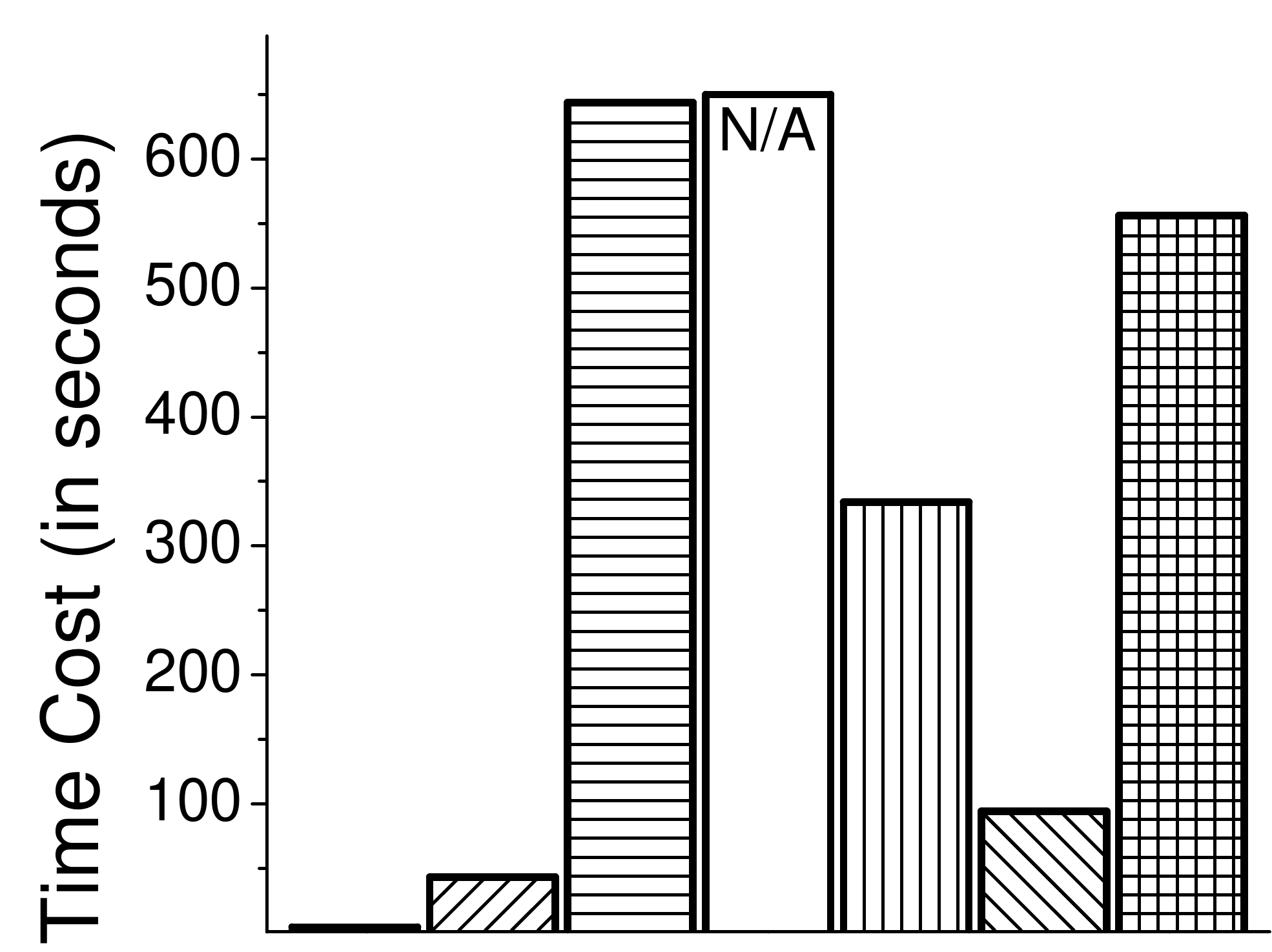}\\
\centering{(c) \emph{Reuters}}
\end{minipage}
\begin{minipage}{0.19\linewidth}
\includegraphics[width=\textwidth]{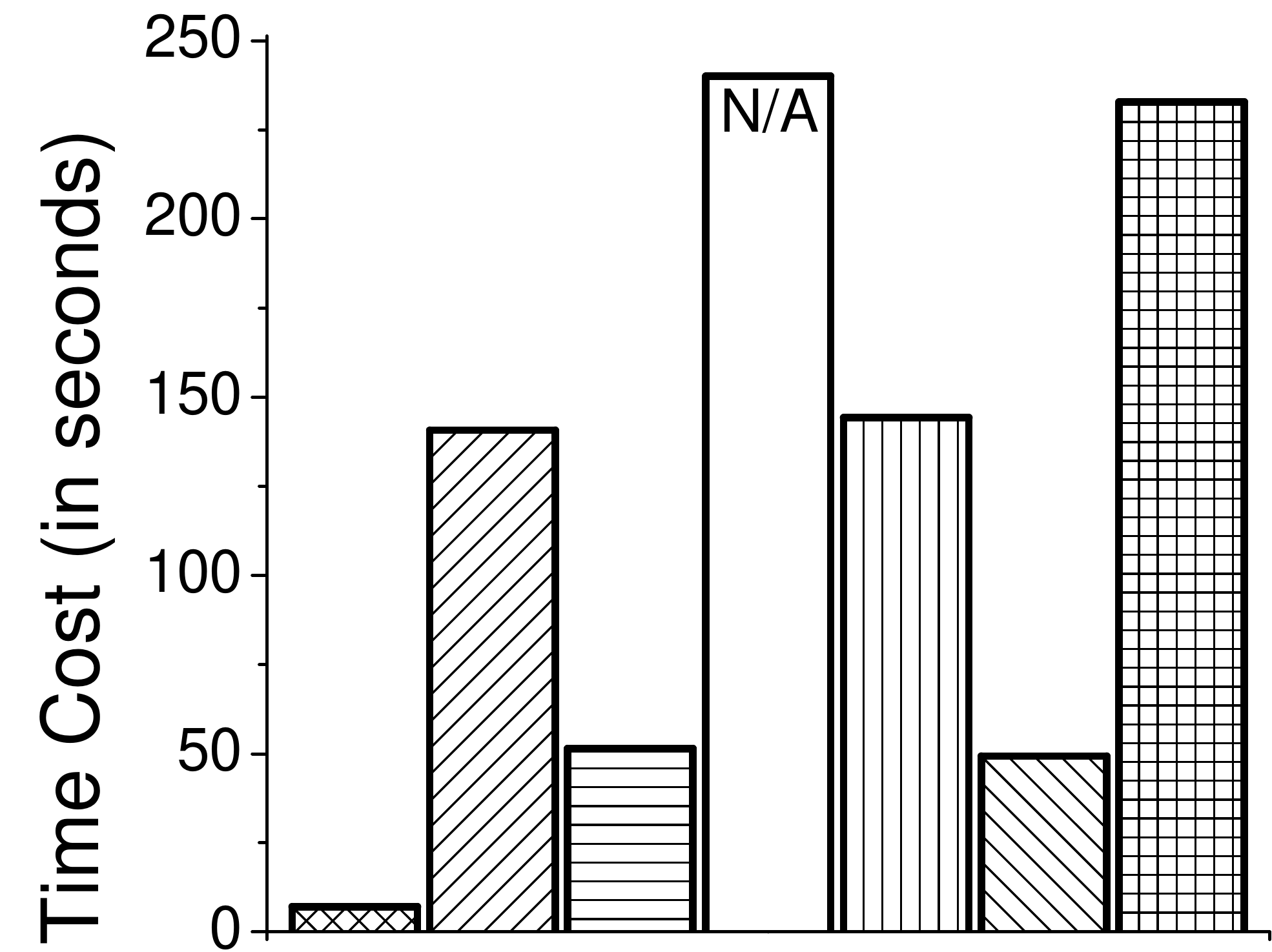}\\
\centering{(d) \emph{Bird Song}}
\end{minipage}
\begin{minipage}{0.19\linewidth}
\includegraphics[width=\textwidth]{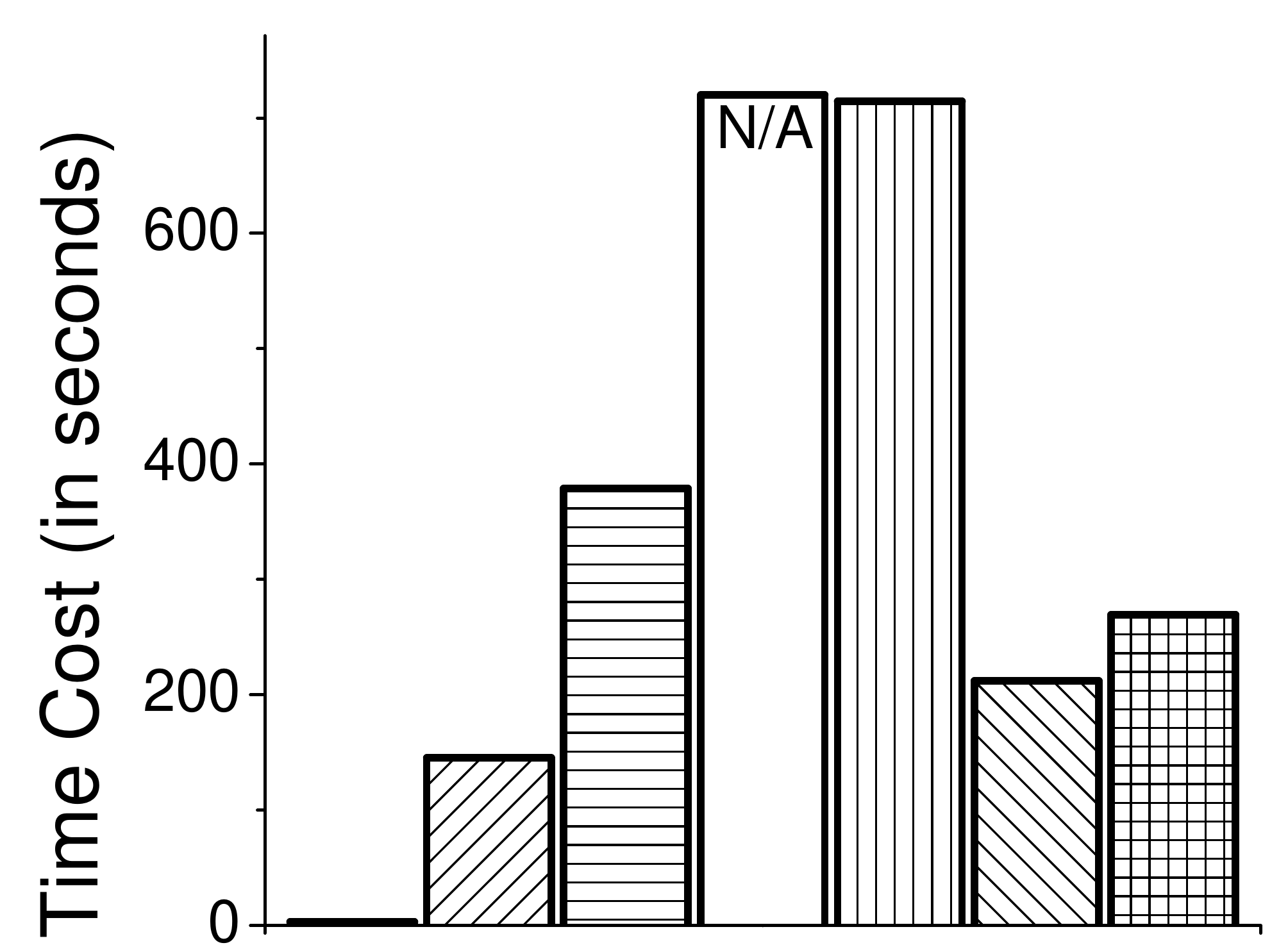}\\
\centering{(e) \emph{Scene}}
\end{minipage}
\caption{Comparison of time cost on six moderate-sized data sets; N/A indicates that no result was obtained in 24 hours; the y-axis in (a) and (b) are log-scaled.}\label{fig:time}
\end{center}
\vspace{-0.2cm}
\end{figure*}

\begin{figure}[!t]
\begin{center}
\begin{minipage}{0.21\linewidth}
\includegraphics[width=\textwidth]{legend2.pdf}
\centering{(a) \emph{legend}}
\end{minipage}
$\quad$
\begin{minipage}{0.32\linewidth}
\includegraphics[width=\textwidth]{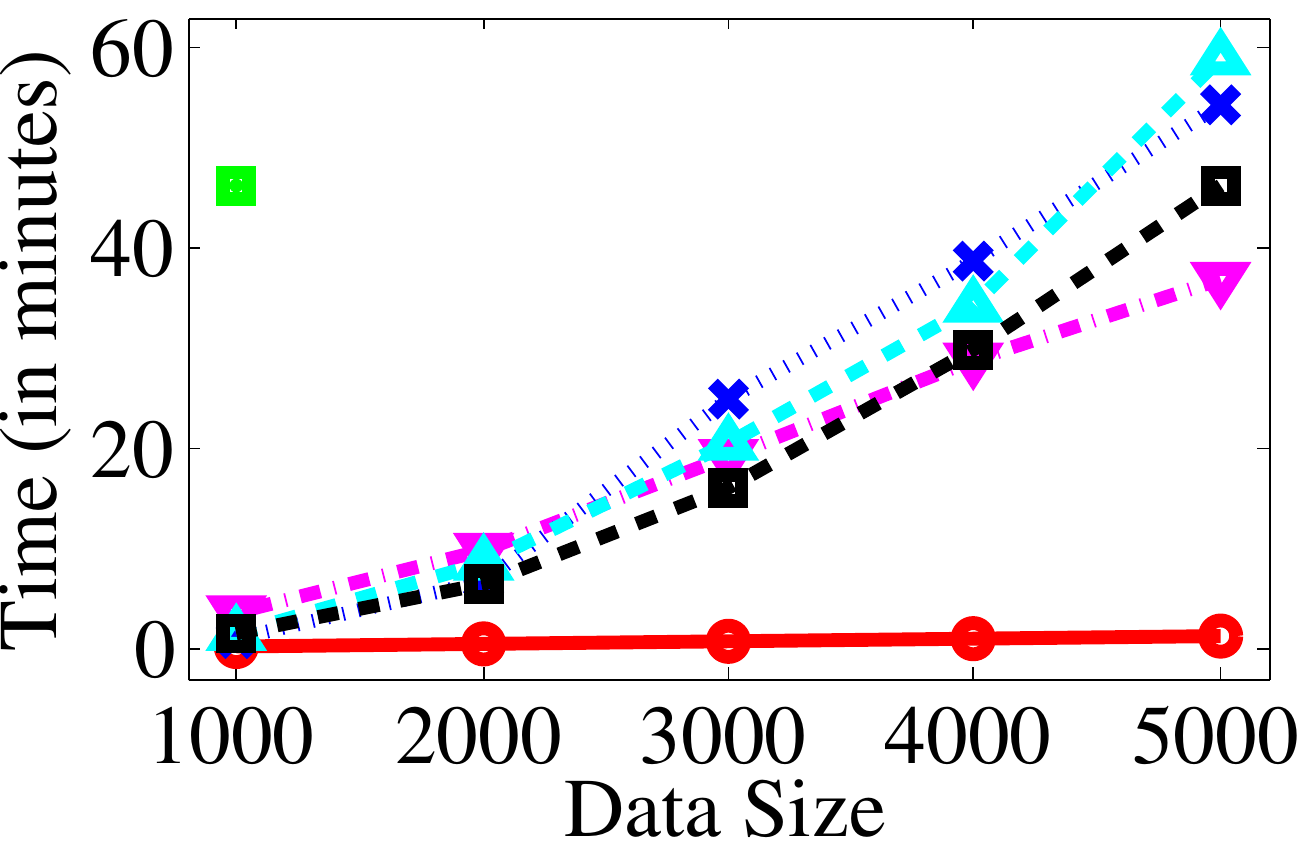}
\centering{(b) \emph{Corel5K}}
\end{minipage}
$\quad$
\begin{minipage}{0.32\linewidth}
\includegraphics[width=\textwidth]{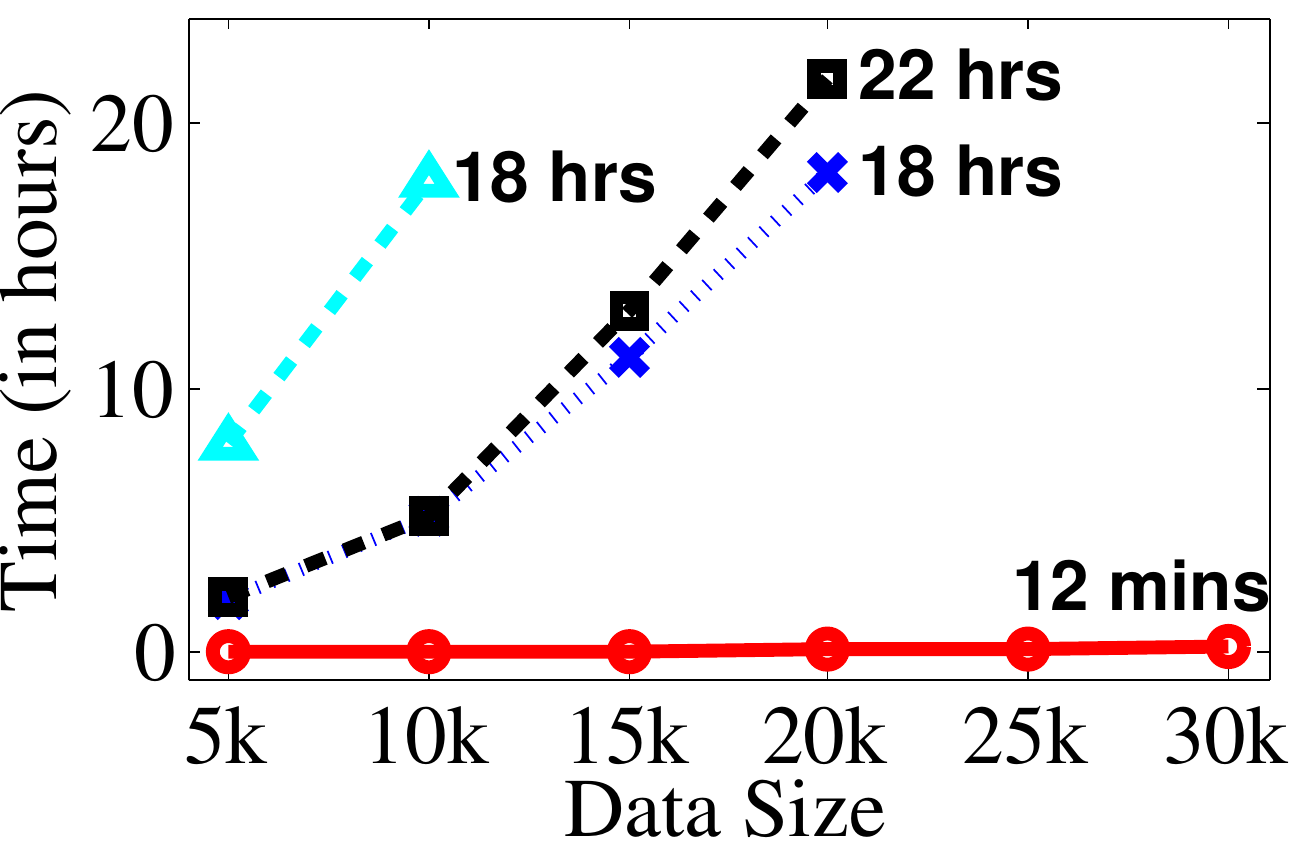}
\centering{(c) \emph{MSRA}}
\end{minipage}
\caption{Comparison of time cost on \emph{Corel5K} and \emph{MSRA} with varying data size.}\label{fig:time2}
\end{center}
\vspace{-0.2cm}
\end{figure}

\subsection{Efficiency Comparison}
It is crucial to study the efficiency of the compared MIML approaches, because our basic motivation is to develop a method that can work on large scale MIML data. All the experiments are performed on a machine with $16\times2.60$ GHz CPUs and 32GB main memory. Again, we first show the time cost of each algorithm on the six moderate-sized data sets in Figure \ref{fig:time}. Since the results on the two smallest data sets \emph{Letter Carroll} and \emph{Letter Frost} are similar, we take one of them as representative to save space. Obviously, our approach is the most efficient one on all the data sets. MIMLBoost is the most time-consuming one, followed by RankLossSIM and MIMLkNN.

The superiority of our approach is more distinguished on larger data sets. As shown in Figure \ref{fig:time2}, on \emph{Corel5K}, MIMLBoost failed to get result in 24 hours even with the smallest subset, while RankLossSIM can handle only 1000 examples. The time costs of existing methods increase dramatically as the data size increases. In contrast, MIMLfast takes only 1 minute even for the largest size in Figure \ref{fig:time2}(a). In Figure \ref{fig:time2}(b), on the largest \emph{MSRA} data, the superiority of MIMLfast is even more apparent. None of existing approaches can deal with more than 20K examples. In contrast, on data of 20,000 bags and 180,000 instances, MIMLfast is more than 100 times faster than the most efficient existing approach; when the data size becomes larger, none of existing approaches can return result in 24 hours, and MIMLfast takes only 12 minutes.

\begin{figure*}[!t]
\begin{center}\label{fig:keyIns}
\begin{minipage}{0.23\linewidth}
\includegraphics[width=\textwidth]{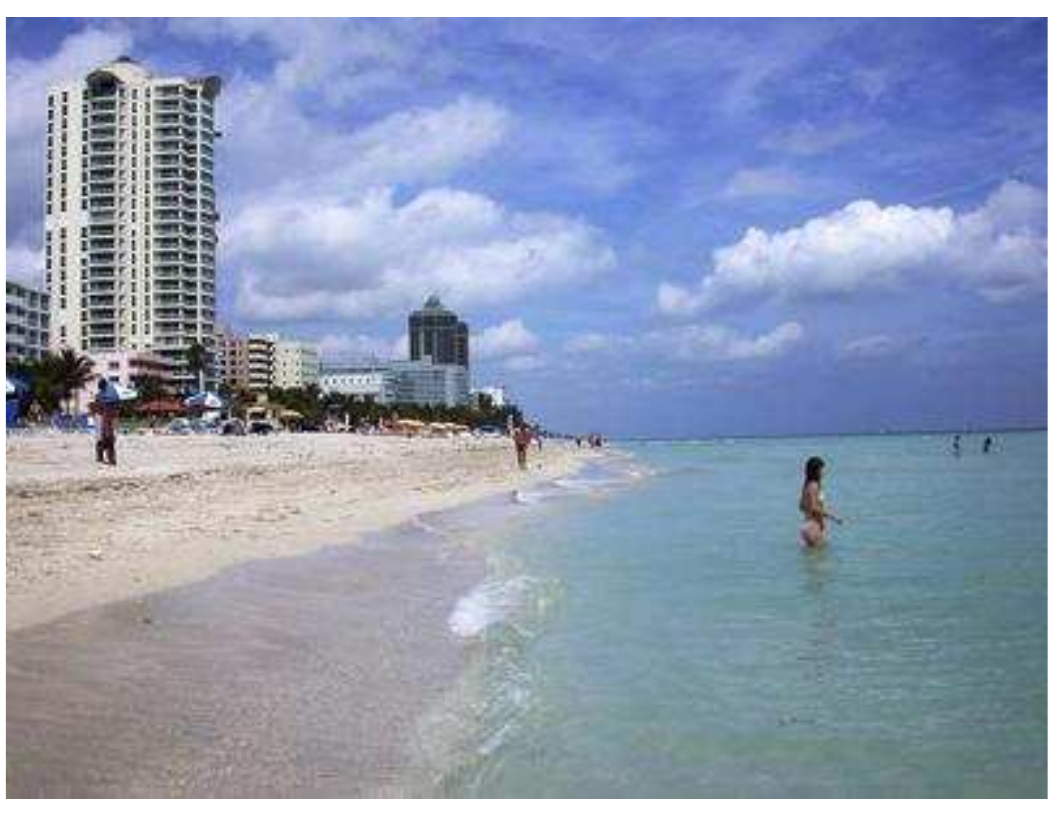}
\centering{original image}
\end{minipage}
\begin{minipage}{0.23\linewidth}
\vspace{-0.2em}
\includegraphics[width=\textwidth]{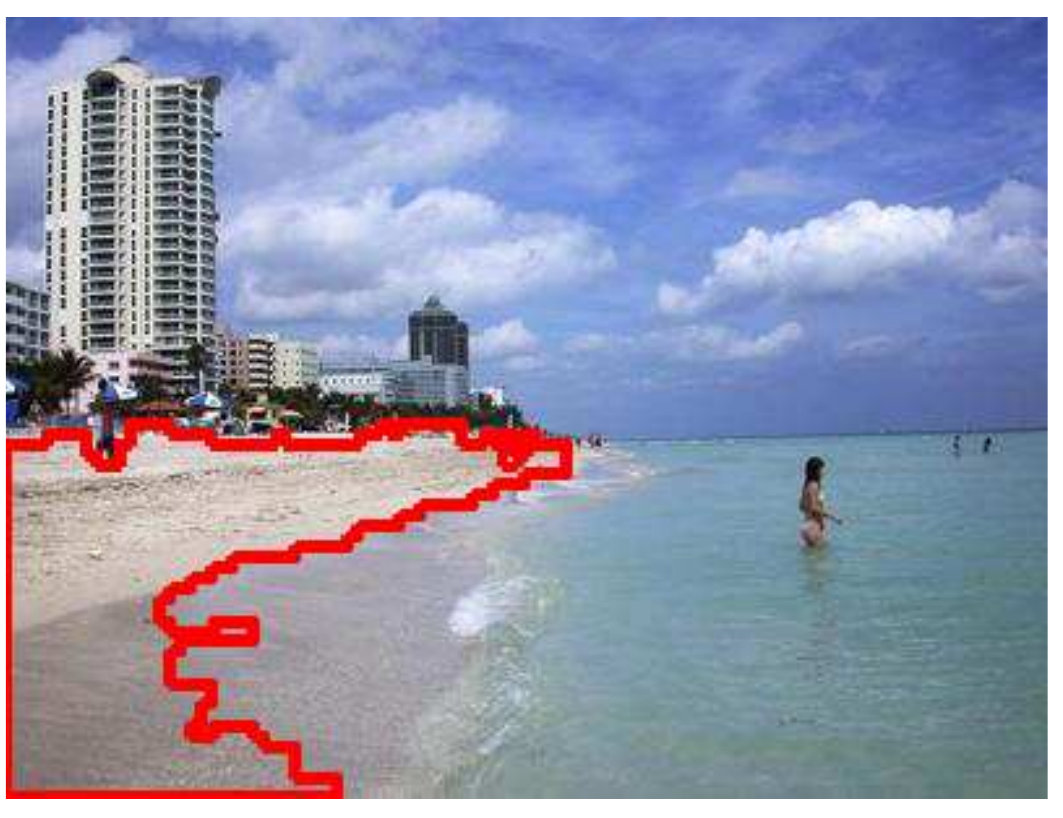}
\centering{label: beach}
\end{minipage}
\begin{minipage}{0.23\linewidth}
\includegraphics[width=\textwidth]{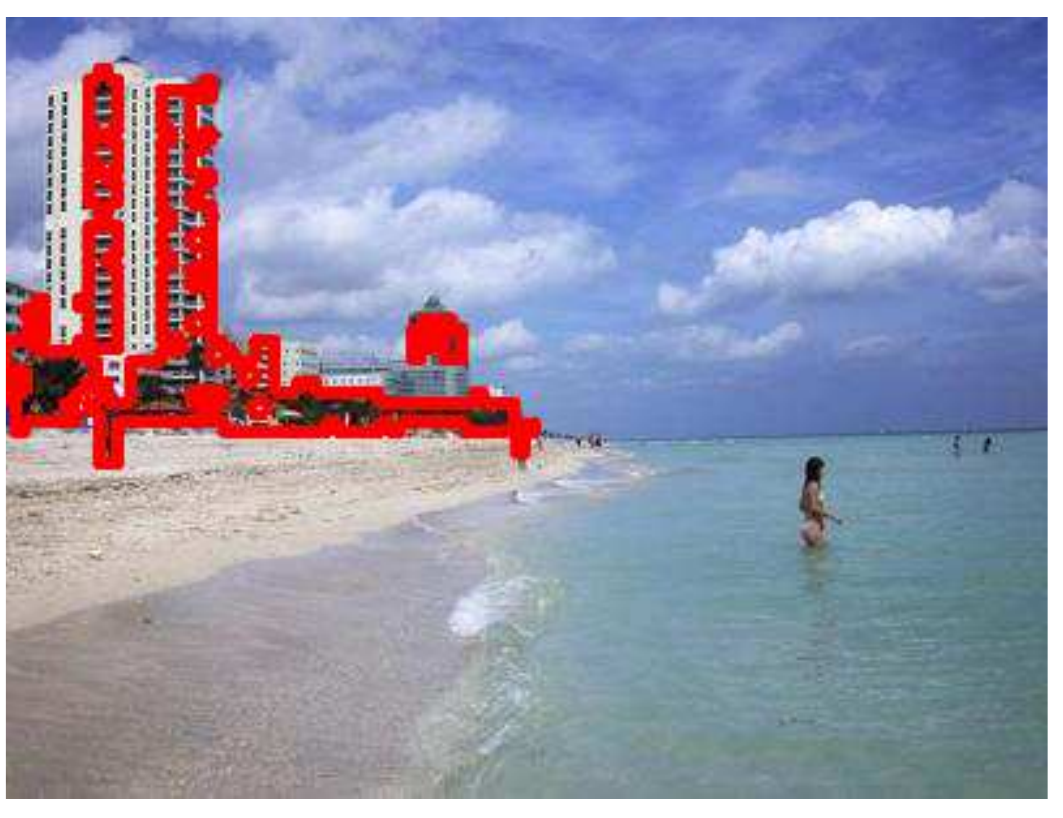}
\centering{label: building}
\end{minipage}
\begin{minipage}{0.23\linewidth}
\vspace{-0.2em}
\includegraphics[width=\textwidth]{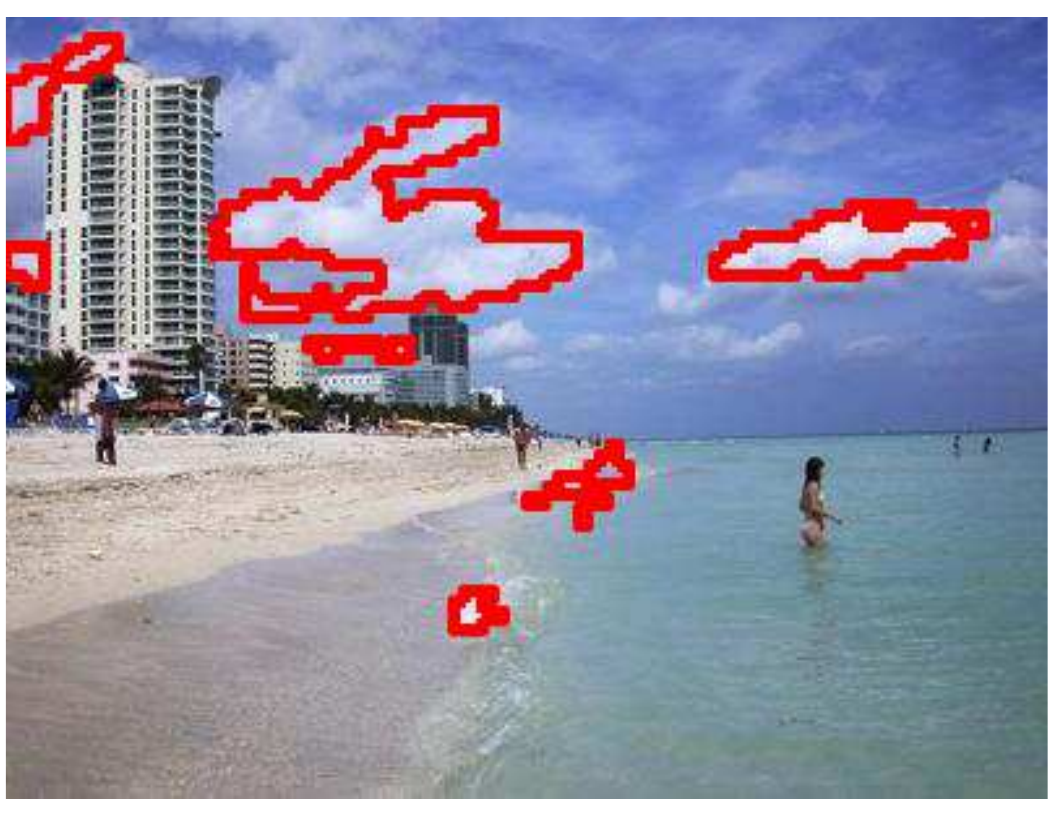}
\centering{label: cloud}
\end{minipage}\\
\begin{minipage}{0.23\linewidth}
$\ $
\end{minipage}
\begin{minipage}{0.23\linewidth}
\vspace{-0.2em}
\includegraphics[width=\textwidth]{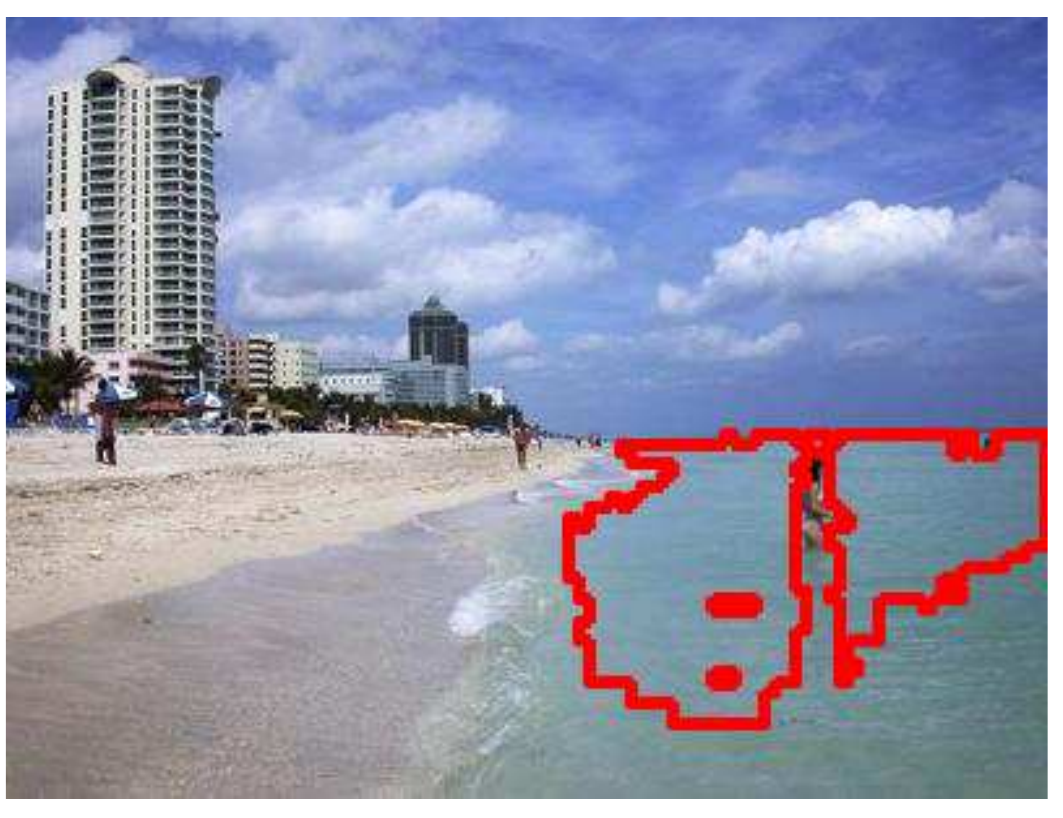}
\centering{label: sea}
\end{minipage}
\begin{minipage}{0.23\linewidth}
\includegraphics[width=\textwidth]{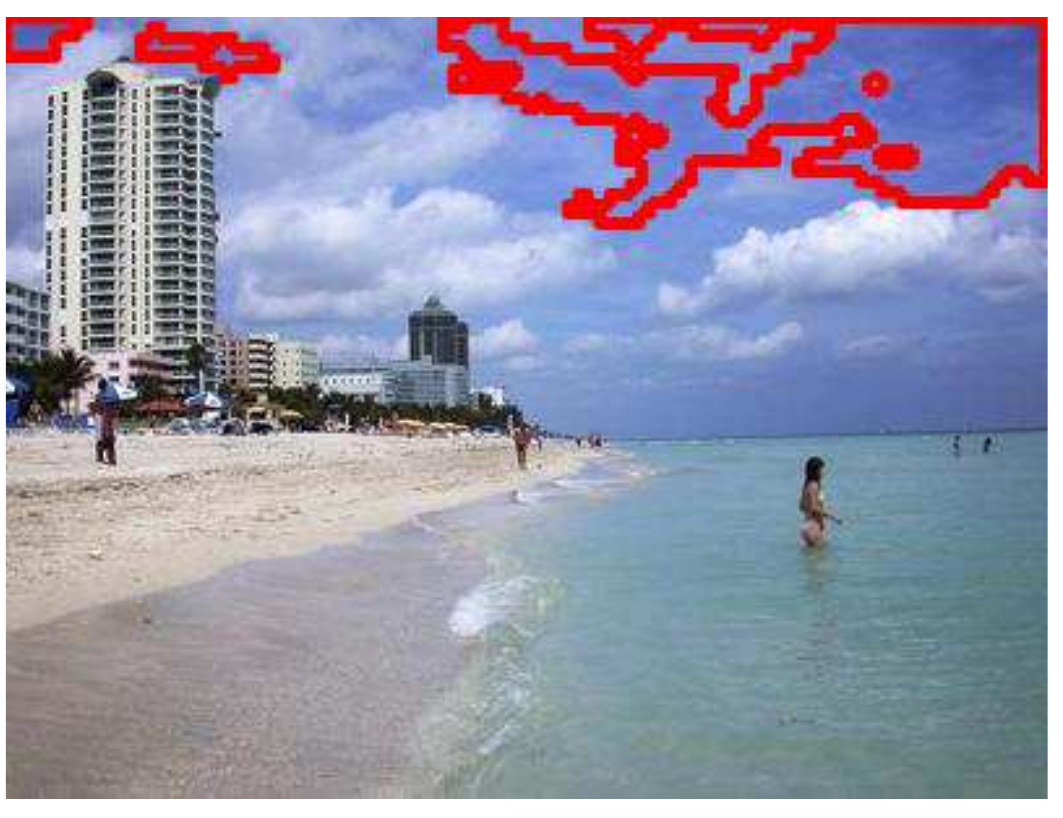}
\centering{label: sky}
\end{minipage}
\begin{minipage}{0.23\linewidth}
\vspace{-0.2em}
\includegraphics[width=\textwidth]{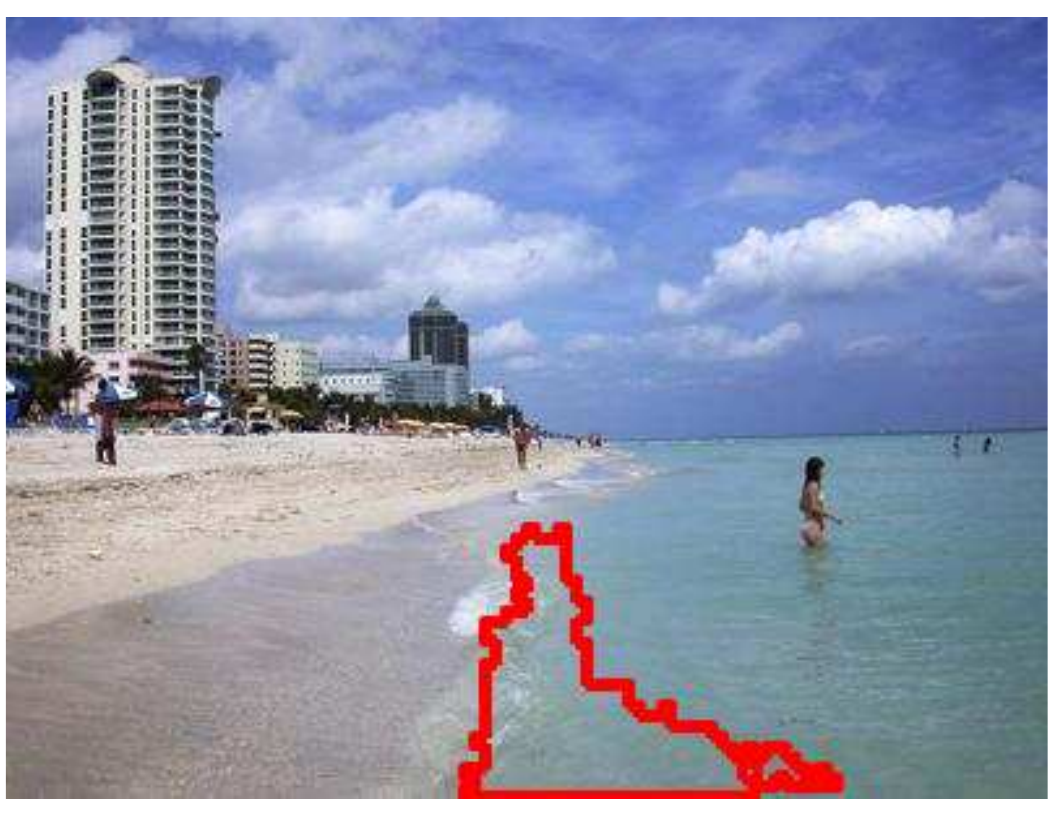}
\centering{label: water}
\end{minipage}\\
\begin{minipage}{0.23\linewidth}
\vspace{0.2em}
\includegraphics[width=\textwidth]{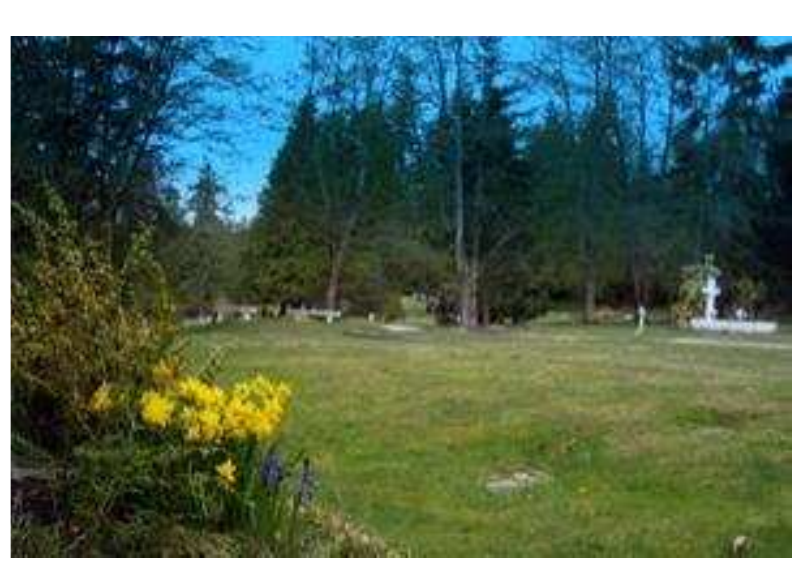}
\centering{original image}
\end{minipage}
\begin{minipage}{0.23\linewidth}
\includegraphics[width=\textwidth]{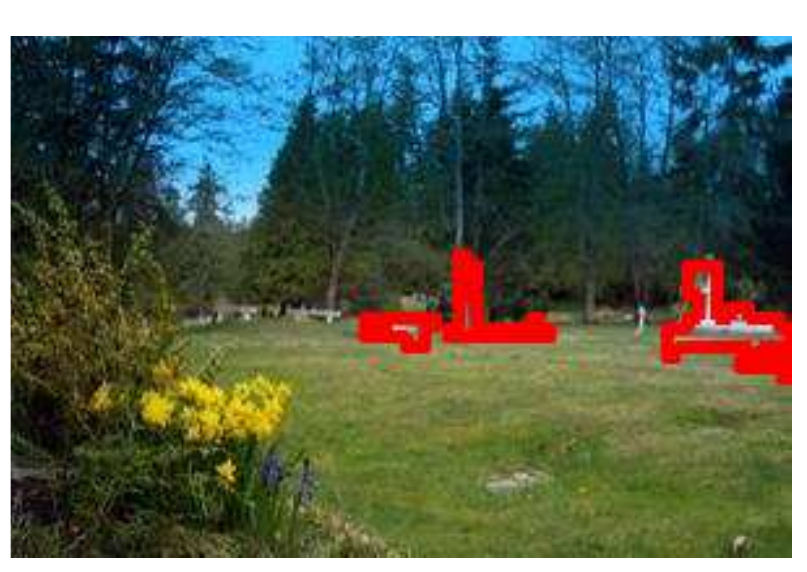}
\centering{label: cross}
\end{minipage}
\begin{minipage}{0.23\linewidth}
\includegraphics[width=\textwidth]{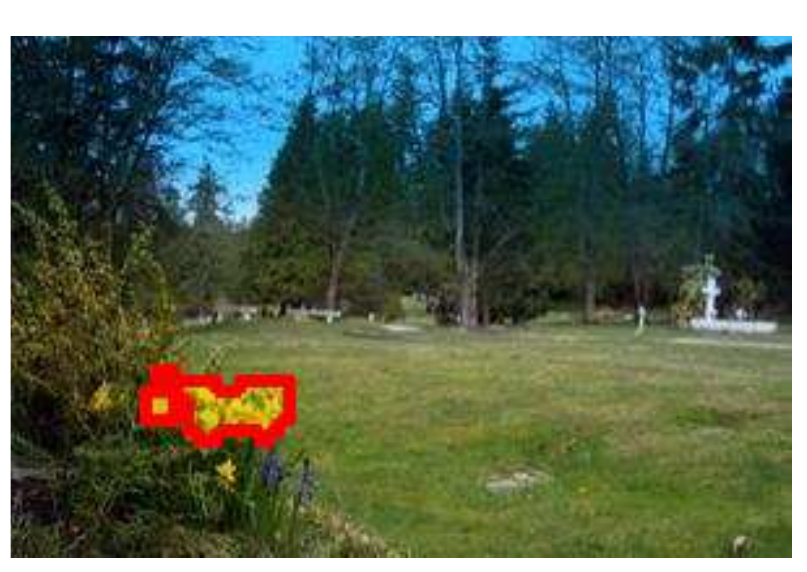}
\centering{label: flower}
\end{minipage}
\begin{minipage}{0.23\linewidth}
\vspace{0.2em}
\includegraphics[width=\textwidth]{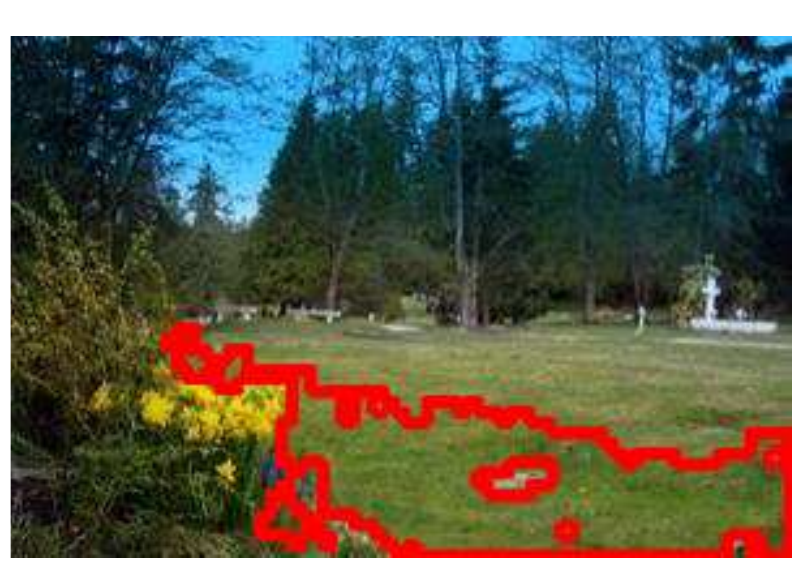}
\centering{label: landscape}
\end{minipage}\\
\begin{minipage}{0.23\linewidth}
$\ $
\end{minipage}
\begin{minipage}{0.23\linewidth}
\includegraphics[width=\textwidth]{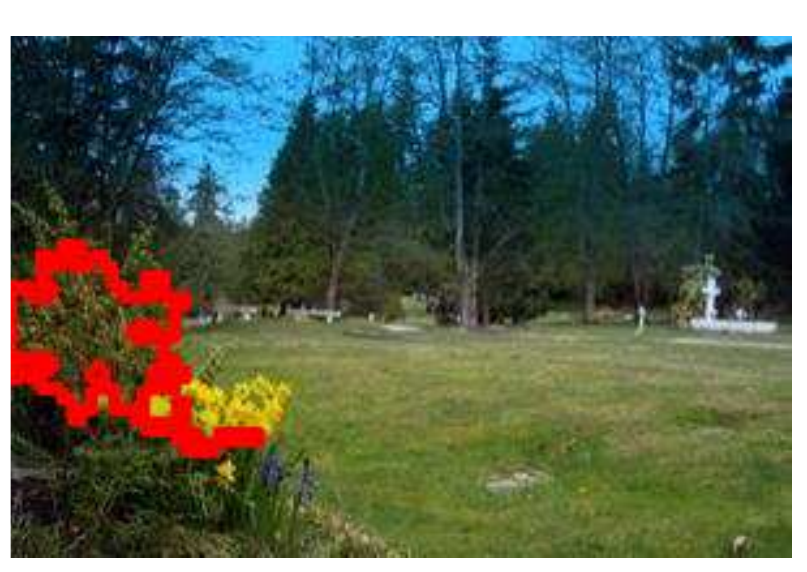}
\centering{label: leaf}
\end{minipage}
\begin{minipage}{0.23\linewidth}
\vspace{0.2em}
\includegraphics[width=\textwidth]{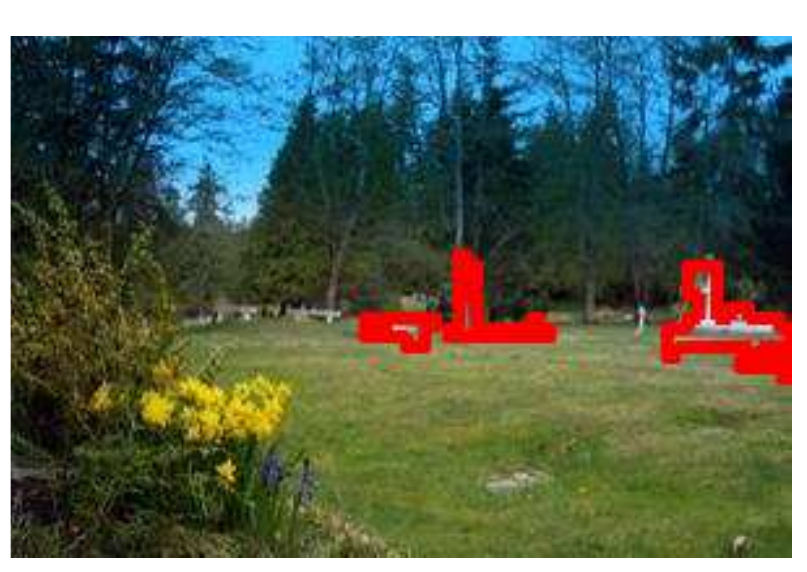}
\centering{label: sculpture}
\end{minipage}
\begin{minipage}{0.23\linewidth}
\vspace{0.2em}
\includegraphics[width=\textwidth]{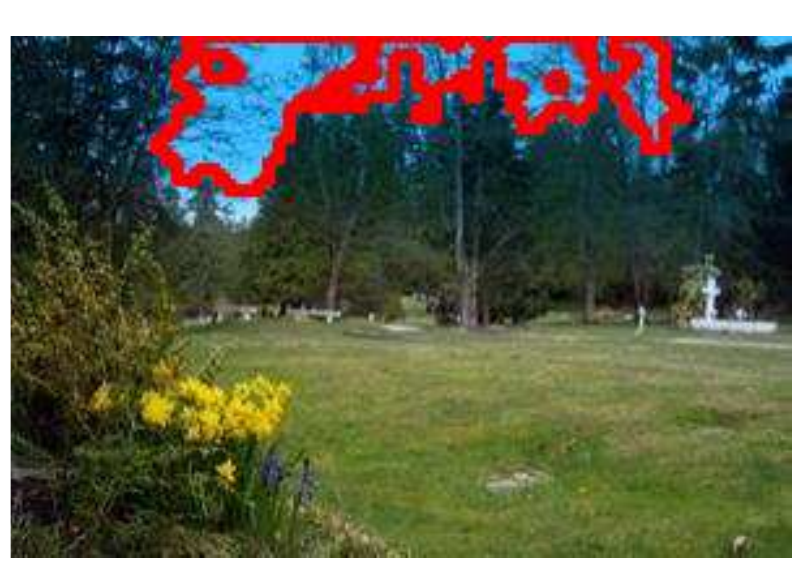}
\centering{label: sky}
\end{minipage}
\caption{Key instances identified by MIMLfast for each label. Image regions corresponding to key instances are highlighted with red contours.}\label{fig:keyIns}
\end{center}
\end{figure*}

\subsection{Key Instance Detection}
In MIML, a set of labels are assigned to a group of instances, and thus it is interesting to understand the relation between input patterns and output label semantics. Inspired by \cite{LHJZ12}, by assuming that each label is triggered by its most positive instance, our MIMLfast approach is able to identify the key instance for each label.

We first give an intuitive evaluation of the key instance detection of MIMLfast. On \emph{MSRA}, following \cite{LHJZ12}, we first partition each image into a set of patches with k-means clustering, and then extract an instance from each cluster. In Figure \ref{fig:keyIns}, we show two example images, and highlight the regions corresponding to the key instance detected by our approach for each label. Note that since the image regions are obtained by clustering, an instance may correspond to multiple regions in the same cluster rather than a single region. The results clearly show that MIMLfast can detect reasonable key instances for the labels.

We also evaluate the key instance detection accuracy quantitatively. On 4 of the 8 MIML data sets, i.e., \emph{Letter Carroll}, \emph{Letter Frost}, \emph{MSRC v2} and \emph{Bird Song}, the instance labels are available, and thus providing a test bed for key instance detection. Among the existing MIML methods, RankLossSIM and KISAR are able to detect key instance for each label, and will be compared with our approach. For MIMLfast and RankLossSIM, the key instance for a specific label is identified by selecting the instance with maximum prediction value on that label, while for KISAR, key instance is the one closest to the prototype of the label as in \cite{LHJZ12}. We examine the ground truth of the detected key instances and present the accuracies in Table \ref{table:keyInstance}. We can observe that KISAR is less accurate than the other two methods, probably because it does not build the model on the instance level, and detects key instance based on unsupervised prototypes. When compared with RankLossSIM, which is specially designed for instance annotation, our approach is more accurate on the two larger data sets, while comparable on \emph{Letter Carroll}, and slightly worse on \emph{Letter Frost}.

\begin{table}[!t]
\caption{Key instance detection accuracy (mean$\pm$std.). The best results are bolded.}
\label{table:keyInstance}
\begin{center}
\begin{tabular}{l|ccc}
\hline
data & MIMLfast & KISAR & RankLossSIM\\
\hline
\emph{LetterCarroll}\raisebox{1em}{} & \textbf{0.67$\pm$0.03} & 0.41$\pm$0.03  & \textbf{0.67$\pm$0.03}\\
\emph{LetterFrost} & 0.67$\pm$0.03 & 0.47$\pm$0.04 & \textbf{0.70$\pm$0.03}\\
\emph{MSRC v2}& \textbf{0.66$\pm$0.03}  & 0.62$\pm$0.03& 0.64$\pm$0.02\\
\emph{Bird Song} & \textbf{0.58$\pm$0.04} & 0.31$\pm$0.03 & 0.42$\pm$0.02\\
\hline
\end{tabular}
\end{center}
\vspace{-0.2cm}
\end{table}

\subsection{Sub-Concept Discovery}
To examine the effectiveness of sub-concept discovery, we run MIMLfast with varying number of sub-concepts on the two benchmark data sets: \emph{Scene} for image classification and \emph{Reuters} for text categorization. Table \ref{table:sub} presents the results with $K$ varying from 1 to 15 with step size of 5. For each value of $K$, we run 10-fold cross validation and report the average results as well as standard deviations. Note that $K$ is selected by cross validation on the training data in Section 3.2. As shown in Table \ref{table:sub}, compared with neglecting the sub-concepts ($K=1$), the exploitation of sub-concepts is helpful ($K=5$, 10 and 15 are all better than $K=1$). When the $K$ gets larger, the difference between results with different $K$ values is not very significant. This may owe to that if we set a $K$ value larger than what is really needed, some sub-concepts might capture no examples, and thus a overly-large $K$ will not make the performance degenerate too much, although it might hamper the efficiency.

We further examine the sub-concepts discovered by MIMLfast. We take the \emph{Scene} data set as an illustration and show some example images of the top-four sub-concepts discovered for the label \emph{sea} in Figure \ref{fig:subconcept}. It is interesting to see that these four sub-concepts are with reasonable but different perceptions: the first sub-concept corresponds to sea with beach and blue sky, the second sub-concept corresponds to big wave in the sea, etc.

\begin{table}[!t]
\caption{Results (mean$\pm$std.) obtained by identifying different numbers of sub-concepts. The best performance and its comparable results based on paired $t$-tests at 95\% significance level are bolded.}
\label{table:sub}
\begin{center}
\begin{tabular}{l|cccc}
\hline
$K$ & 1 & 5 & 10 & 15\\
\hline
\multicolumn{2}{l}{\emph{Scene}}\raisebox{1em}{$\ $}\\
\hline
\textsf{hamming loss} $^{\downarrow}$\raisebox{1em}{} & .191$\pm$.011 & \textbf{.186$\pm$.009} & \textbf{.182$\pm$.014} & \textbf{.181$\pm$.011}\\
\textsf{one error} $^{\downarrow}$ & .366$\pm$.038 & .354$\pm$.026 & \textbf{.338$\pm$.030} & \textbf{.344$\pm$.031}\\
\textsf{coverage} $^{\downarrow}$ & .224$\pm$.018 & .213$\pm$.015 & \textbf{.202$\pm$.017} & .210$\pm$.014\\
\textsf{ranking loss} $^{\downarrow}$ & .209$\pm$.020 & .196$\pm$.017 & \textbf{.184$\pm$.018} & .192$\pm$.016\\
\textsf{average precision} $^{\uparrow}$ & .754$\pm$.023 & .764$\pm$.018 & \textbf{.777$\pm$.020} & \textbf{.769$\pm$.019}\\
\hline
\multicolumn{2}{l}{\emph{Reuters}}\raisebox{1em}{$\ $}\\
\hline
\textsf{hamming loss} $^{\downarrow}$\raisebox{1em}{} & .027$\pm$.008 & .026$\pm$.006 & \textbf{.025$\pm$.006} & \textbf{.025$\pm$.007}\\
\textsf{one error} $^{\downarrow}$ & .042$\pm$.013 & .040$\pm$.009 & \textbf{.037$\pm$.007} & \textbf{.040$\pm$.010}\\
\textsf{coverage} $^{\downarrow}$ & .036$\pm$.007 & \textbf{.035$\pm$.006} & \textbf{.034$\pm$.006} & \textbf{.035$\pm$.007}\\
\textsf{ranking loss} $^{\downarrow}$ & .015$\pm$.006 & \textbf{.014$\pm$.005} & \textbf{.013$\pm$.005} & \textbf{.014$\pm$.006}\\
\textsf{average precision} $^{\uparrow}$ & .972$\pm$.010 & .974$\pm$.007 & \textbf{.976$\pm$.006} & \textbf{.974$\pm$.008}\\
\hline
\end{tabular}
\end{center}
\end{table}

\begin{figure}[!t]
\begin{center}
\begin{minipage}{0.2\linewidth}
\includegraphics[width=\textwidth]{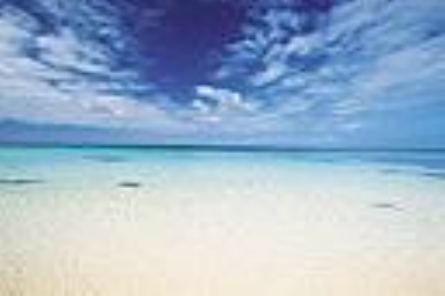}
\end{minipage}
\begin{minipage}{0.2\linewidth}
\includegraphics[width=\textwidth]{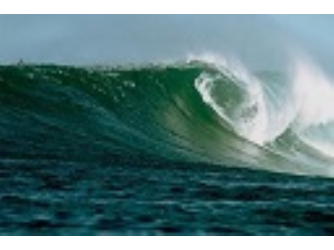}
\end{minipage}
\begin{minipage}{0.2\linewidth}
\includegraphics[width=\textwidth]{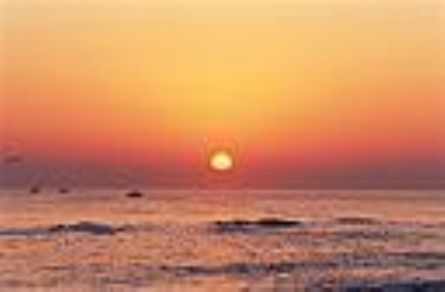}
\end{minipage}
\begin{minipage}{0.2\linewidth}
\includegraphics[width=\textwidth]{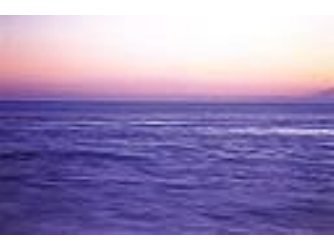}
\end{minipage}
\begin{minipage}{0.2\linewidth}
\includegraphics[width=\textwidth]{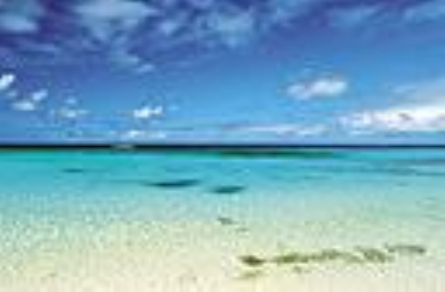}
\end{minipage}
\begin{minipage}{0.2\linewidth}
\includegraphics[width=\textwidth]{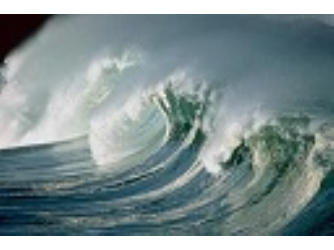}
\end{minipage}
\begin{minipage}{0.2\linewidth}
\includegraphics[width=\textwidth]{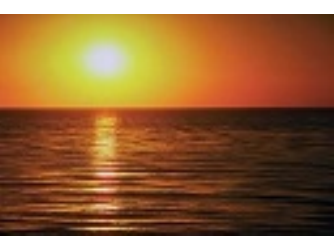}
\end{minipage}
\begin{minipage}{0.2\linewidth}
\includegraphics[width=\textwidth]{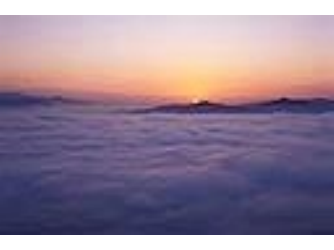}
\end{minipage}
\begin{minipage}{0.2\linewidth}
\includegraphics[width=\textwidth]{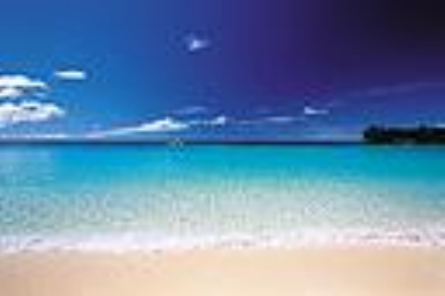}
\end{minipage}
\begin{minipage}{0.2\linewidth}
\includegraphics[width=\textwidth]{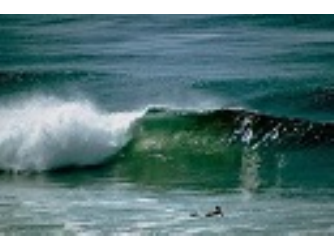}
\end{minipage}
\begin{minipage}{0.2\linewidth}
\includegraphics[width=\textwidth]{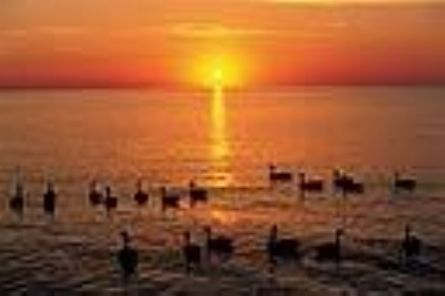}
\end{minipage}
\begin{minipage}{0.2\linewidth}
\includegraphics[width=\textwidth]{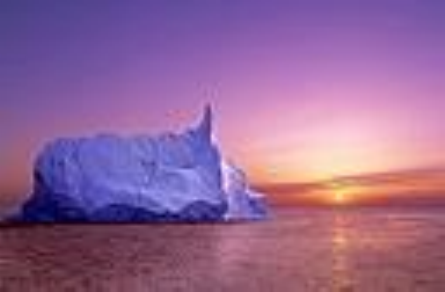}
\end{minipage}
\caption{Example images of different sub-concepts identified for label \emph{sea}, where one column corresponds to one sub-concept.}\label{fig:subconcept}
\end{center}
\vspace{-0.2cm}
\end{figure}

\begin{table}[!t]
\caption{Comparison results (mean$\pm$std.) of MIMLfast with two variants (V1 and V2). The best performance and its comparable results based on paired $t$-tests at 95\% significance level are bolded.}
\label{table:variants}
\begin{center}
\begin{tabular}{l|ccc}
\hline
 & MIMLfast & V1 & V2\\
\hline
\multicolumn{2}{l}{\emph{Scene}}\raisebox{1em}{$\ $}\\
\hline
\textsf{hamming loss} $^{\downarrow}$ & \textbf{.188$\pm$.009} & .211$\pm$.009 & .196$\pm$.012\\
\textsf{one error} $^{\downarrow}$ & \textbf{.351$\pm$.023} & .409$\pm$.023 & \textbf{.358$\pm$.030}\\
\textsf{coverage} $^{\downarrow}$ & \textbf{.207$\pm$.012} & .239$\pm$.011 & \textbf{.208$\pm$.014}\\
\textsf{ranking loss} $^{\downarrow}$ & \textbf{.189$\pm$.014} & .228$\pm$.013 & \textbf{.192$\pm$.016}\\
\textsf{avg. precision} $^{\uparrow}$ & \textbf{.770$\pm$.015} & .730$\pm$.014 & \textbf{.767$\pm$.018}\\
\hline
\multicolumn{2}{l}{\emph{Reuters}}\raisebox{1em}{$\ $}\\
\hline
\textsf{hamming loss} $^{\downarrow}$ & \textbf{.028$\pm$.004} & .038$\pm$.004 & .035$\pm$.003\\
\textsf{one error} $^{\downarrow}$ & \textbf{.044$\pm$.008} & .060$\pm$.011 & .046$\pm$.010\\
\textsf{coverage} $^{\downarrow}$ & \textbf{.035$\pm$.004} & .038$\pm$.005 & \textbf{.035$\pm$.004}\\
\textsf{ranking loss} $^{\downarrow}$ & \textbf{.014$\pm$.004} & .019$\pm$.004 & \textbf{.015$\pm$.003}\\
\textsf{avg. precision} $^{\uparrow}$ & \textbf{.972$\pm$.005} & .963$\pm$.007 & \textbf{.971$\pm$.006}\\
\hline
\end{tabular}
\end{center}
\vspace{-0.2cm}
\end{table}

\subsection{Comparison with Variants}
To further examine how MIMLfast works, we study two variants, V1 and V2. V1 gives up $W_0$ in Eq. \ref{eq:model} and directly learns a linear model for each label. It is constructed to examine whether learning the shared space is helpful. V2 simply selects the top $r$ labels as relevant ones, where $r$ is the average number of relevant labels on the training data. It is constructed to examine whether the dummy label provides a good separation of relevant and irrelevant labels.

Table \ref{table:variants} shows the results on the two benchmark data sets. V1 is significantly worse than MIMLfast on all criteria, implying that learning the shared space for all the labels is better than learning each label independently. On \textsf{hamming loss}, MIMLfast achieves significantly better performance than V2, while on the other four criteria, they achieve comparable performances, implying that the use of dummy label does not affect the rank of the labels but providing a reasonable separation of relevant and irrelevant labels.

\section{Related Work}
Many MIML approaches were proposed during the past few years. For example, MIMLSVM \cite{ZZ07} degenerated the MIML problem into single-instance multi-label tasks to solve. MIMLBoost \cite{ZZ07} degenerated MIML to multi-instance single-label learning. A generative model for MIML was proposed by Yang et al. \cite{YZH09}. Nearest neighbor and neural network approaches for MIML were proposed in \cite{Z10} and \cite{ZW09}, respectively. Zha et al. \cite{ZHMWQW08} proposed a hidden conditional random field model for MIML image annotation. Briggs et al. \cite{BFR12} proposed to optimize ranking loss for MIML instance annotation. In \cite{LHJZ12}, the authors tried to discover what patters trigger what labels in MIML learning by constructing a prototype for each label with clustering. Existing MIML approaches achieved success in many applications, most with moderate-sized data owing to the high computational load. To handle large-scale data, MIML approaches with high efficiency are demanded.

In \cite{WBU11}, a similar technique was used to optimize WARP loss for image annotation; however, it dealt with single-instance single-label problem, which is quite different from our MIML problem. In \cite{ZZHL12}, an approach of discovering sub-concepts for complicated concepts was proposed based on clustering. However, it was focused on  single label learning, quite different from our MIML task. Moreover, MIMLfast exploits label information and discovers sub-concepts using supervised model rather than heuristic clustering.

\section{Conclusion}
MIML is a framework for learning with complicated objects, and has been proved to be effective in many applications. However, existing MIML approaches are usually too time-consuming to deal with large scale problems. In this paper, we propose the MIMLfast approach to learn with MIML examples fast. On one hand, efficiency is highly improved by optimizing the approximated ranking loss with SGD based on a two level linear model; on the other hand, effectiveness is achieved by exploiting label relations in a shared space and discovering sub-concepts for complicated labels. Moreover, our approach can naturally detect key instance for each label, and thus providing a chance to discover the relation between input patterns and output label semantics. In the future, we will try to optimize other loss functions rather than ranking loss. Also, larger scale problems will be studied.

\bibliographystyle{plain}
\bibliography{miml}

\end{document}